\newcommand*\diff{\mathop{}\!\mathrm{d}}
\newtheorem{lemma}{Lemma}
\newtheorem{definition}{Definition}
\newtheorem{theorem}{Theorem}
\title{Local Intrinsic Dimensionality Signals Adversarial Perturbations}
\author{{Sandamal Weerasinghe}\thanks{email: prameesha.weerasinghe@unimelb.edu.au}\\
	School of Computing and Information Systems\\
	University of Melbourne\\
	Australia \\
	\And
	{Tansu Alpcan} \\
	Department of Electrical and Electronic Engineering\\
	University of Melbourne\\
	Australia \\
	\And
	{Sarah M. Erfani} \\
	University of Melbourne\\
	Australia \\
	\And
	{Christopher Leckie} \\
	University of Melbourne\\
	Australia \\
	\And
	{Benjamin I. P. Rubinstein} \\
	University of Melbourne\\
	Australia \\
}
\begin{document}
\maketitle

\begin{abstract}
The vulnerability of machine learning models to adversarial perturbations
has motivated a significant amount of research under the broad umbrella
of adversarial machine learning. Sophisticated attacks may cause learning algorithms to learn decision functions or make decisions with poor predictive performance. In this context, there is a growing body of literature that uses local intrinsic dimensionality (LID), a local metric that describes the minimum number of latent variables required to describe each data point, for detecting adversarial samples and subsequently mitigating their effects. The research to date has tended to focus on using LID  as a practical defence method often without fully explaining why LID can detect adversarial samples. In this paper, we derive a lower-bound and an upper-bound for the LID value of a perturbed data point and demonstrate that the bounds, in particular the lower-bound, has a positive correlation with the magnitude of the perturbation. Hence, we demonstrate that data points that are perturbed by a large amount would have large LID values compared to unperturbed samples, thus justifying its use in the prior literature. Furthermore, our empirical validation demonstrates the validity of the bounds on benchmark datasets.
\end{abstract}

\keywords{local intrinsic dimensionality \and adversarial machine learning \and poisoning attack}

\section{Introduction}\label{intro}
Machine learning is fast becoming a key instrument in domains such as finance, cybersecurity, and engineering. However, there is a growing body of literature that suggests most machine learning algorithms, including deep neural networks (DNNs), are vulnerable to adversarial attacks. By adding carefully crafted perturbations to data points, adversaries aim to either avoid detection (i.e., test time attack) or alter the decision functions learned by the machine learning models (i.e., training data poisoning). As a consequence, applications that rely on machine learning for high-stakes automated decision making may take incorrect actions with severe consequences. Therefore, investigating defense mechanisms against adversarial attacks is a continuing concern within the adversarial machine learning community.

Adversarial attacks take place in situations where attackers have opportunities to alter existing data points or introduce new malicious data points. For example, training data poisoning can occur through the use of malware or when training data is collected using crowd-sourcing marketplaces, where organizations build data sets with the help of individuals whose authenticity cannot be guaranteed. Due to the size and complexity of datasets, it is infeasible to extensively validate the quality of all data/labels. Therefore, researchers have explored multiple approaches to address this problem and reduce the vulnerability of machine learning algorithms to adversarial attacks.

Recent research on adversarial machine learning defenses can be categorized as either a certified defense or an empirical defense. A certified defense attempts to learn provably robust prediction models against norm-bounded adversarial perturbations such as $\ell_2$ or $\ell_\infty$. For example, the recent works of \citet{wong2018provable}, \citet{cohen2019certified} and \citet{lecuyer2019certified} introduce certified defenses particularly for deep neural networks. In contrast, empirical defenses either incorporate adversarial data into the training process
\citep{pmlr-v119-zhang20z,madry2018towards,wang2019improving} or incorporate adversarial sample detection components to the machine learning pipeline. More recently, several works have explored empirical defenses that use Local Intrinsic Dimensionality (LID) \citep{LID1_Houle} as the basis for detecting adversarial samples. 

The LID value of a data point identifies the minimum number of latent variables required to represent that data point and recent evidence suggests that LID is an indicator of the degree of being an outlier \citep{houle2018correlation}. For example, \citet{pmlr-v80-ma18d} use LID to characterize adversarial subspaces in the context of Deep Neural Networks (DNNs). The authors use a novel learning strategy for DNNs based on LID that monitors the dimensionality of subspaces during training and adapts the loss function to minimize the impact of noisy labels. \citet{weerasinghe2020closing} introduce a black-box defense for nonlinear regression built around LID to identify and reduce the impact of adversarial samples.

Although previous works have established the intuition behind using LID for detecting adversarial samples, they fail to provide a theoretical justification for this behavior of LID. In this paper, we aim to address this by obtaining a lower-bound (and an upper-bound) for the LID value of a perturbed data point and show that there is an association between the size of the perturbation and the (lower-bound of the) LID value of the resulting data point.
Specifically, we theoretically analyze the impact of adversarial perturbations using a simple setup comprising three data points, (i) the benign data point being perturbed, $\mathbf{a}$, (ii) its position after being perturbed, $\mathbf{b}$, and (iii) a benign reference data point, $\mathbf{c}$. Using the distribution of distances from the perturbed location (i.e., $\mathbf{b}$), we present a lower bound and an upper bound for its LID value (\text{LID}($\mathbf{b}$)). Furthermore, by considering the two possible directions of perturbation (i.e., towards or farther from the reference point $\mathbf{c}$), we identify the least lower and upper bounds when perturbed towards $\mathbf{c}$ and the greatest lower and upper bounds when perturbed farther from $\mathbf{c}$. These theorems indicate a positive association between the LID value and the size of the perturbations, thereby justifying the use of LID as an indicator of adversarial perturbations in previous works.

The main contributions of this paper are summarized as follows. We present a lower bound and an upper bound for the LID value of a perturbed data point based on the distribution of distances to its neighbors. We then identify the least lower and upper bounds when the perturbation is towards the reference point, and the greatest upper and lower bounds when the perturbation is away from it. Finally, we provide empirical evidence on benchmark data sets to support our theoretical contributions.
%

\section{Literature Review}\label{sec:lit_review}
\subsection{Theory of Local Intrinsic Dimensionality}
The minimum number of latent variables needed to represent the data is considered as the intrinsic dimensionality (ID) of that particular data set. The ID of a dataset is thus an indicator of its complexity. A number of previous works have proposed characterizations of global intrinsic dimensionality (for data sets) and local intrinsic dimensionality (for a single data point) \citep{6406405}. More recently \citet{LID1_Houle} proposed a characterization of LID in which the distribution of distances to a query point is modeled in terms of a continuous random variable \citep{LID1_Houle, LID2_Houle}. \citet{amsaleg2015estimating} developed several estimators of LID based on extreme value theory,
using maximum likelihood estimation (MLE), the method of moments (MoM), probability weighted moments (PWM) and regularly varying functions (RV). Of these, the MLE model of LID estimation has been used for LID estimation in several works in adversarial machine learning including this paper (Definition \ref{def:2}). The accuracy of the MLE estimator is dependent on the number of neighbors considered in the calculation. More recently, \citet{amsaleg2019intrinsic} proposed an estimator of LID that achieves accurate results even with small neighborhood sizes.

Intrinsic dimensionality (through the use of estimators) has been utilized in many applications such as dimensionality reduction, outlier detection, similarity search and subspace clustering. In this paper, we focus on its ability to signal the degree of a sample being an outlier in the context of adversarial machine learning.

\subsection{Applications of LID to Adversarial Machine Learning}\label{sec:lit_lid_applications}
To date, several studies have investigated the usability of LID as a detector of adversarial samples. For example, \citet{LID_sarah_ICLR} use LID for detecting adversarial samples in Deep Neural Networks (DNNs). The authors use the LID values of the deep neural network representation (post-activation values at each layer) of images to train a classifier that identifies flipped labels with high success. For a similar application, \citet{pmlr-v80-ma18d} use a dimensionality-driven learning strategy which dynamically adapts the loss function of the DNN based on the dimensionality of subspaces (measured using LID). 

\citet{LID_SVM} introduce a novel defense mechanism against poisoning and label flipping attacks using Local Intrinsic Dimensionality (LID). The proposed defense employs several label dependent variants of LID (i.e., in-class LID, out-class LID and cross-class LID) against label flipping attacks. \citet{ijcai2021-437} proposed a black-box defense for nonlinear regression learning. First, the defense identifies samples that have the greatest influence on the learner's validation error and then uses LID to iteratively identify poisoned samples via a generative probabilistic model, and suppress their influence on the prediction function.

\citet{21375796} apply a LID measure called CrossLID to assess the degree to
which the distribution of data generated by a Generative Adversarial Network (GAN) conforms to the distribution of real data. The authors demonstrate that by incorporating the proposed LID based evaluation into the learning process, the generation quality of the GAN improves.

The work of \citet{amsaleg2020high} is the most closely related to the work presented in this paper. \citet{amsaleg2020high} introduced an upper-bound for the minimum perturbation required ($\delta$) to change the rank of a data point from $1$ to $k$ (or $k$ to $1$) w.r.t. a reference point in the context of an information retrieval application. Furthermore, the authors demonstrate that small perturbations are sufficient to change the ranks of data points with relatively larger LID values compared to data points with smaller LID values. 

In this work, we provide an upper-bound and a lower-bound for the LID value of a perturbed sample based on the distances (and the cumulative probability distribution of distances) to its starting location and a benign reference point. Furthermore, we identify the greatest (least) lower-bound and upper-bound when the data point is perturbed away from (towards) the reference point.

\section{Theoretical Overview of LID}\label{sec:background}
A particular challenge when using high dimensional data (e.g., images, natural language) is identifying lower dimensional representations with minimal loss of information. A considerable amount of literature has been published on characterizing the intrinsic dimensionality (i.e., the minimum number of latent variables required to describe the data) of data. These include fractal methods such as the Hausdorff dimension \citep{gneiting2012estimators} and Minkowski–Bouligand dimension \citep{falconer2004fractal}, topological approaches \citep{rozza2012novel}, and dimensional models such as minimum neighbor distance models \citep{rozza2012novel}, expansion dimension \citep{karger2002finding}, generalized expansion dimension \citep{6406405}, and LID \citep{6753958}. Of these, LID \citep{6753958} has been of particular interest to the machine learning community for detecting adversarial data points (Section \ref{sec:lit_lid_applications}). Therefore, in this paper, we focus on the adversarial detection capabilities of LID.

We start by discussing the theoretical background behind LID. First, we briefly introduce the theory of LID for assessing the dimensionality of data subspaces. Expansion models of dimensionality have previously been successfully employed in a wide range of applications such as manifold learning, dimension reduction, similarity search, and anomaly detection \citep{LID1_Houle,amsaleg2015estimating}. Next, we analyze the effect of perturbations on the LID values of data points. 

\subsection{Theory of Local Intrinsic Dimensionality.}\label{sec:lid_theory}
In the theory of intrinsic dimensionality, classical expansion models measure the rate of growth in the number of data samples encountered as the distance from the sample of interest increases \citep{LID1_Houle}. As an example, in Euclidean space, the volume of an \textit{m}-dimensional hyper-sphere grows proportionally to $r^m$, when its size is scaled by a factor of $r$. If we consider two hyper-spheres sharing the same center with volumes $V_{1}$ and $V_{2}$ and radii $r_1$ and $r_2$, respectively, the expansion dimension $m$ can be deduced as:
\begin{equation}
	\frac{V_{2}}{V_{1}}=
	\left(\frac{r_{2}}{r_{1}}\right)^m\Rightarrow m=\frac{\ln\left(V_{2}/V_{1}\right)}{\ln\left(r_2/r_1\right)}.\\
\end{equation}

Transferring the concept of expansion dimension to the statistical setting of continuous distance distributions leads to the formal definition of LID. By substituting the cumulative probability of distance for volume, LID provides measures of the intrinsic dimensionality of the underlying data subspace. Refer to the work of \cite{LID1_Houle} for more details concerning the theory of LID. The formal definition of LID is given below.
\begin{definition}[Local Intrinsic Dimensionality \citep{LID1_Houle}]\label{def:1}
	Given a data sample $\mathbf{a}\in\mathcal{S}$, let $R > 0$ be a random variable denoting the distance from $\mathbf{a}$ to other data samples. If the cumulative probability distribution function $F(r)$ of $R$ is positive and continuously differentiable at distance $r > 0$, the intrinsic dimensionality of $\mathbf{a}$ at distance $r$ is given by:
	\begin{equation}\label{eq:lid} 
		\text{ID}_{F}(r)\triangleq \lim_{\epsilon\to0^+}\frac{\text{ln}\big(F((1+\epsilon)\cdot r)/F(r)\big)}{\text{ln}(1+\epsilon)}=\frac{r\cdot F'(r)}{F(r)},
	\end{equation}
	whenever the limit exists.	
\end{definition}
The last equality of \eqref{eq:lid} follows by applying L'H\^{o}pital's rule to the limits \citep{LID1_Houle}. The local intrinsic dimension at $\mathbf{a}$ is in turn defined as the limit when the radius $r$ tends to zero:
\begin{equation}\label{eq:lid_2}
	\text{LID}(\mathbf{a})=\lim_{r\to0}\text{ID}_{F}(r).
\end{equation}

$\text{LID}(\mathbf{a})$ measures the probability density $F'(r)$ normalized by the cumulative probability density $F(r)/r$. In the ideal case where the data in the vicinity of $\mathbf{a}$ is distributed uniformly within a subspace, $\text{LID}(\mathbf{a})$ equals the dimension of the subspace; however, in practice these distributions are not ideal, the manifold model of data does not perfectly apply, and $\text{LID}(\mathbf{a})$ is not an integer \citep{LID_sarah_ICLR}. Nevertheless, the local intrinsic dimensionality is an indicator of the dimension of the subspace containing $\mathbf{a}$ that would best fit the data distribution in the vicinity of $\mathbf{a}$.

The smallest distances from point $\mathbf{a}$ can be regarded as extreme events associated with the lower tail of the underlying distance distribution. Using extreme
value theory (EVT), the following theorem shows that $F(r)$ completely determines $\text{ID}_F$.
\begin{theorem}[Local ID Representation \citep{houle2017local}]
	Let $F:\mathbb{R}^{\geq0}\rightarrow \mathbb{R}$ be a real-valued function, and assume that the $\text{ID}_{F}(0)$ exists. Let $r$ and $w$ be values for which $r/w$ and $F(r)/F(w)$ are both positive. If $F$ is non-zero and continuously differentiable everywhere in the interval containing $[\text{min}\{r, w\}, \text{max}\{r, w\}]$, then
	\begin{equation}\label{thm:theorem_1}
		\begin{aligned}
			\dfrac{F(r)}{F(w)}
			&=\Big(\dfrac{r}{w}\Big)^{\text{ID}_{F}(0)}\cdot G_{F,w}(r), \text{where}\\
			G_{F,w}(r)
			&\triangleq\exp\bigg(\mathop{\mathlarger{\int\limits_{r}^{w}}} \dfrac{\text{ID}_{F}(0)-\text{ID}_{F}(t)}{t}dt\bigg),
		\end{aligned}
	\end{equation}
	whenever the integral exists.
\end{theorem}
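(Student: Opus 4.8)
The plan is to reduce the multiplicative identity to a statement about logarithmic derivatives and then integrate. The starting observation is that Definition~\ref{def:1} gives $\text{ID}_F(t)=t\,F'(t)/F(t)$ for every $t$ in the interval of interest, so that $F'(t)/F(t)=\text{ID}_F(t)/t$. Since $F$ is continuously differentiable and nonzero throughout $[\min\{r,w\},\max\{r,w\}]$, the quantity $F'(t)/F(t)$ is precisely the derivative of $\ln|F(t)|$, and it is continuous on this compact interval. This reframing converts the multiplicative claim into an additive one once logarithms are taken.

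Next I would apply the fundamental theorem of calculus. Integrating $\frac{d}{dt}\ln|F(t)|=\text{ID}_F(t)/t$ from $w$ to $r$ yields $\ln|F(r)|-\ln|F(w)|=\int_{w}^{r}\text{ID}_F(t)/t\,dt$. Because $r/w$ and $F(r)/F(w)$ are assumed positive, the ratio $F(r)/F(w)$ equals $|F(r)|/|F(w)|$ and its logarithm is well defined, so the left-hand side is exactly $\ln\!\big(F(r)/F(w)\big)$.

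Then I would verify that the logarithm of the right-hand side of the claimed formula agrees with this. Taking logs of $(r/w)^{\text{ID}_F(0)}\cdot G_{F,w}(r)$ gives $\text{ID}_F(0)\ln(r/w)+\int_{r}^{w}\frac{\text{ID}_F(0)-\text{ID}_F(t)}{t}\,dt$. Splitting the integral, the $\text{ID}_F(0)$ term integrates to $\text{ID}_F(0)\ln(w/r)=-\text{ID}_F(0)\ln(r/w)$, which cancels the leading power term and leaves $-\int_{r}^{w}\text{ID}_F(t)/t\,dt=\int_{w}^{r}\text{ID}_F(t)/t\,dt$. This coincides with the expression from the previous step, so the two logarithms are equal; exponentiating (legitimate since both sides are positive) completes the proof.

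The computation itself is short, so the main thing to handle with care is the justification for taking logarithms and invoking the fundamental theorem of calculus. The hypotheses that $F$ is nonzero and $C^1$ on the closed interval ensure that $F$ does not change sign there and that $t\mapsto \text{ID}_F(t)/t$ is continuous and bounded (the interval is bounded away from $0$ since $r,w>0$), so the integral exists and FTC applies; the positivity assumptions on $r/w$ and $F(r)/F(w)$ are exactly what let us pass between $\ln F$ and $\ln|F|$ and form $\ln(r/w)$. The hypothesis that $\text{ID}_F(0)$ exists serves only to make the constant $\text{ID}_F(0)$ meaningful in the statement and is not otherwise needed for the range of integration considered here.
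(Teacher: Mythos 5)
Your proof is correct. Note that the paper does not actually prove this theorem---it is quoted as background from \citet{houle2017local}---but your argument (using $\text{ID}_F(t)=t\,F'(t)/F(t)$ to recognize $\text{ID}_F(t)/t$ as the logarithmic derivative of $F$, integrating from $w$ to $r$ via the fundamental theorem of calculus, and checking that this matches the logarithm of $(r/w)^{\text{ID}_F(0)}\,G_{F,w}(r)$ after splitting the integral) is exactly the standard derivation in that cited source, and your care with $\ln|F|$ versus $\ln F$ and with the existence of the integral on a compact interval bounded away from zero is sound.
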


Furthermore, let $e>1$ be a constant. Then, as per Theorem 3 of \citet{houle2017local},
\begin{equation}
	\begin{aligned}
		\lim_{\substack{w\to0^{+} \\ 0<1/e\leq r/w\leq e}} G_{F,w}(r)=1.
	\end{aligned}
\end{equation}

The LID formula defined in \eqref{eq:lid_2} is the theoretical calculation of LID. We describe below the empirical estimation of LID ($\widehat{\text{LID}}$) as follows.
\begin{definition}[Estimation of LID \citep{amsaleg2015estimating}]\label{def:2}
	Given a reference sample $\mathbf{a} \sim \mathcal{P}$, where $\mathcal{P}$ represents the data distribution, the Maximum Likelihood Estimator of the LID at $\mathbf{a}$ is defined as follows:
	\begin{equation}\label{eq:lid_estimation}
		\widehat{\text{LID}}(\mathbf{a})=-\Bigg(\frac{1}{k}\sum_{i=1}^{k}\text{log}\frac{r_{i}(\mathbf{a})}{r_{\text{max}}(\mathbf{a})}\Bigg)^{-1}.
	\end{equation}
	Here, $r_{i}(\mathbf{a})$ denotes the distance between $\mathbf{a}$ and its $i$-th nearest neighbor within a sample of $k$ points drawn from $\mathcal{P}$, and $r_{\text{max}}(\mathbf{a})$ is the maximum of the neighbor distances.
\end{definition}
The above estimation assumes that samples are drawn from a tight neighborhood, in line with its development from extreme value theory. In practice, the sample set is drawn uniformly from the available training data (omitting $x$ itself), which itself is presumed to have been randomly drawn from $\mathcal{P}$. 

\section{Adversarial Perturbations and  LID}\label{sec:attack_model}

As stated in Section \ref{sec:lit_review}, multiple prior works use defenses built around LID for detecting and suppressing the impact of adversarial samples. In the literature, adversarial samples are usually generated by perturbing benign data in directions provided by the attack algorithm \citep{43405, moosavi2016deepfool,maini2020adversarial}. By perturbing the feature vectors, the adversary moves poisoned samples away from the domain of benign samples. Thus, each poisoned sample would have an irregular distribution of the local distance to its neighbors, which would be reflected by its LID value. For example, the LID estimate of the data point $\mathbf{a}\in \mathcal{S}$, i.e., $\widehat{\text{LID}}(\mathbf{a})$, is an indicator of the dimension of the subspace that contains $\mathbf{a}$, by comparing the LID estimate of a data sample to that of other data points, any data points that have substantially different lower-dimensional subspaces would get highlighted.


\begin{table}[h!]
	\begin{center}
		\caption{Summary of the Notation.}
		\label{tab:notation}
		\begin{tabular}{l|p{8cm}} 
			\hline
			$\mathcal{S}$ & The domain of data samples.\\
			$\mathbf{a}$ & The benign data point being perturbed ($\mathbf{a}\in\mathcal{S}$).\\
			$\mathbf{b}$ & The position of $\mathbf{a}$, after the perturbation ($\mathbf{b}\in\mathcal{S}$).\\
			$\mathbf{c}$ & A benign reference point ($\mathbf{c}\in\mathcal{S}$).\\
			$d(\mathbf{a},\mathbf{b})$ & A distance measure that measures the distance between any two data points in $\mathcal{S}$.\\
			$x$ & The distance between $\mathbf{a}$ and $\mathbf{c}$ measured using $d$.\\
			$y$ & The distance between $\mathbf{b}$ and $\mathbf{c}$ measured using $d$.\\
			$\delta$ & A real valued constant that determines the size of perturbations.\\
			$\delta x$ & The distance between $\mathbf{b}$ and $\mathbf{a}$ measured using $d$.\\
			$\mathcal{D}$ & The univariate distribution of the distances to other data points measured from any data point in $\mathcal{S}$.  \\
			$F(r)$ & The cumulative probability distribution function of $\mathcal{D}$.\\
			$\text{LID}(\mathbf{a})$ & The local intrinsic dimensionality of $\mathbf{a}$.\\
			$\widehat{\text{LID}}(\mathbf{a})$ & The numerical estimation of $\text{LID}(\mathbf{a})$.\\
			$n$ & The number of samples (data points) in $\mathcal{S}$ (i.e., $|\mathcal{S}|$).\\
			$p_{\mathbf{c}}$ & The cumulative probability of $F_{\mathbf{b}}$ at distance $y$ (i.e., $F_{\mathbf{b}}(y)$).\\
			$k_{\mathbf{c}}$ & The expected rank of $\mathbf{c}$ w.r.t the distribution of distances from $\mathbf{b}$ (i.e., $p_{\mathbf{c}}\times n)$.\\
			$\theta$ & The angle between $\mathbf{b}-\mathbf{a}$ and $\mathbf{c}-\mathbf{a}$.\\
		\end{tabular}
	\end{center}
\end{table}

\subsection{Problem Setup and Notation}

\begin{figure}[hp]
	\centering
	\tikzset{every picture/.style={line width=0.75pt}} 
	\begin{tikzpicture}[x=0.75pt,y=0.75pt,yscale=-1,xscale=1]
		
		\draw    (442.67,329.67) -- (544,330) ;
		\draw [shift={(544,330)}, rotate = 0.19] [color={rgb, 255:red, 0; green, 0; blue, 0 }  ][fill={rgb, 255:red, 0; green, 0; blue, 0 }  ][line width=0.75]      (0, 0) circle [x radius= 3.35, y radius= 3.35]   ;
		\draw    (400.5,288.4) -- (444,330) ;
		\draw [shift={(444,330)}, rotate = 43.72] [color={rgb, 255:red, 0; green, 0; blue, 0 }  ][fill={rgb, 255:red, 0; green, 0; blue, 0 }  ][line width=0.75]      (0, 0) circle [x radius= 3.35, y radius= 3.35]   ;
		\draw [shift={(398.33,286.33)}, rotate = 43.72] [fill={rgb, 255:red, 0; green, 0; blue, 0 }  ][line width=0.08]  [draw opacity=0] (8.93,-4.29) -- (0,0) -- (8.93,4.29) -- cycle    ;
		\draw  [dash pattern={on 0.84pt off 2.51pt}]  (397,286) -- (544,330) ;
		\draw  [draw opacity=0] (429.87,315.61) .. controls (432,315.03) and (434.45,314.72) .. (437.12,314.75) .. controls (448.05,314.86) and (458.72,320.58) .. (460.95,327.51) .. controls (461.2,328.3) and (461.34,329.07) .. (461.36,329.81) -- (441.16,327.3) -- cycle ; \draw   (429.87,315.61) .. controls (432,315.03) and (434.45,314.72) .. (437.12,314.75) .. controls (448.05,314.86) and (458.72,320.58) .. (460.95,327.51) .. controls (461.2,328.3) and (461.34,329.07) .. (461.36,329.81) ;
		\draw    (444,290) -- (444,350) ;
		
		\draw (419,335) node [anchor=north west][inner sep=0.75pt]    {$\mathbf{a}$};
		\draw (544,335) node [anchor=north west][inner sep=0.75pt]    {$\mathbf{c}$};
		\draw (379,279) node [anchor=north west][inner sep=0.75pt]    {$\mathbf{b}$};
		\draw (483,335) node [anchor=north west][inner sep=0.75pt]    {$x$};
		\draw (458,309) node [anchor=north west][inner sep=0.75pt]    {$\theta $};
		\draw (449,279) node [anchor=north west][inner sep=0.75pt]    {$y$};
		\draw (402,309) node [anchor=north west][inner sep=0.75pt]    {$\delta x$};
	\end{tikzpicture}
	\caption{The positions of the benign sample $\mathbf{a}$, its perturbed version $\mathbf{b}$, and reference point $\mathbf{c}$.}
	\label{fig:perturbations_figure}
\end{figure}
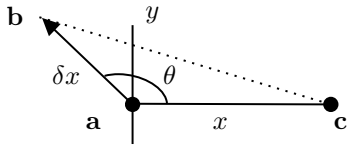

We start by describing the problem setup using three data points $\mathbf{a}, \mathbf{b}$ and $\mathbf{c}$ as depicted in Figure \ref{fig:perturbations_figure}. Take $\mathbf{a}\in \mathcal{S}$ as the initial (benign) point that the attacker aims to perturb and $\mathbf{b}\in \mathcal{S}$ as the resulting perturbed point. Take $\mathbf{c} \in \mathcal{S}$ as some benign reference location. In this section, we analyze how the perturbations affect $\text{LID}(\mathbf{b})$ (i.e., the LID value of the perturbed data point). Define $d(\mathbf{a},\mathbf{b})$ as the distance measure that measures the distance between any two data points in $\mathcal{S}$. Although we use Euclidean distance as the measure of distance in this paper, we note that any measure of distance for which the definition of LID holds is feasible. Thus, $\mathcal{D}_{\mathbf{b}}$ is defined as a univariate distribution of the distances to other data points measured from $\mathbf{b}$. 

Any point, for example $\mathbf{c}$, determines a distance $y=d(\mathbf{c}, \textbf{b})=\Vert \mathbf{b}-\mathbf{c}\Vert$, which in turn determines a probability $p_{\mathbf{c}}=F_{\mathbf{b}}(y)$. That is, any sample drawn from $\mathcal{S}$ has probability $p_{\mathbf{c}}$ being closer to $\mathbf{b}$. Therefore, with respect to the distribution $\mathcal{D}_{\mathbf{b}}$, the point $\mathbf{c}$ is given a distributional rank $p_{\mathbf{c}}$. We define the expected rank of $\mathbf{c}$ as $k_{\mathbf{c}}:=p_{\mathbf{c}}\times n$, where $n=|\mathcal{S}|$.

Lemma \ref{lem:1} considers a point $\mathbf{a}$ at distance $x$ from $\mathbf{c}$ (i.e., $x=\Vert \mathbf{a}-\mathbf{c}\Vert$). The point $\mathbf{b}$, which is produced by perturbing $\mathbf{a}$, has distance $\delta x$ from $\mathbf{a}$ for some proportion $\delta>0$ (see Fig. \ref{fig:perturbations_figure}). Define $\theta$ as the angle between $\mathbf{b}-\mathbf{a}$ and $\mathbf{c}-\mathbf{a}$. The lemma gives sufficient conditions on $\delta$ and $\theta$ for $\mathbf{b}$ to be closer to (resp. farther from) $\mathbf{c}$ compared to $\mathbf{a}$.

\begin{lemma}\label{lem:1}
	Define the distance $y=\Vert\mathbf{b}-\mathbf{c}\Vert$ and the associated cumulative probability $p_{c}=F_{\mathbf{b}}(y)$. Similarly, 
	define the distance $\delta x=\Vert\mathbf{b}-\mathbf{a}\Vert$ and the associated cumulative probability $p_{\mathbf{a}}=F_{\mathbf{b}}(\delta x)$ where $\delta>0$ and $x=\Vert\mathbf{a}-\mathbf{c}\Vert$. For $y>x$, that is when $\mathbf{b}$ is perturbed farther from $\mathbf{c}$ (relative to $\mathbf{a}$), $\delta > 2\cos\theta$. For $y<x$, (i.e., perturbed towards $\mathbf{c}$) $\delta < 2\cos\theta$ and $-\frac{\pi}{2}<\theta<\frac{\pi}{2}$. 
\end{lemma}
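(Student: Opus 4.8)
The plan is to collapse the entire statement into a single application of the law of cosines on the triangle with vertices $\mathbf{a},\mathbf{b},\mathbf{c}$, followed by an elementary sign analysis. First I would write the displacement as $\mathbf{b}-\mathbf{c}=(\mathbf{b}-\mathbf{a})-(\mathbf{c}-\mathbf{a})$ and expand the squared Euclidean norm, using the given lengths $\Vert\mathbf{b}-\mathbf{a}\Vert=\delta x$ and $\Vert\mathbf{c}-\mathbf{a}\Vert=x$ together with the definition of $\theta$ as the angle between $\mathbf{b}-\mathbf{a}$ and $\mathbf{c}-\mathbf{a}$. Since the dot product $(\mathbf{b}-\mathbf{a})\cdot(\mathbf{c}-\mathbf{a})$ equals $\delta x^{2}\cos\theta$, this produces
$$y^{2}=\Vert\mathbf{b}-\mathbf{c}\Vert^{2}=\delta^{2}x^{2}-2\delta x^{2}\cos\theta+x^{2}=x^{2}\bigl(1+\delta^{2}-2\delta\cos\theta\bigr).$$

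Next, because $x,y\geq 0$, comparing $y$ with $x$ is equivalent to comparing $y^{2}$ with $x^{2}$. Assuming $\mathbf{a}\neq\mathbf{c}$ so that $x>0$, I would divide by $x^{2}$ and read off that $y>x$ holds exactly when $1+\delta^{2}-2\delta\cos\theta>1$, i.e. when $\delta^{2}-2\delta\cos\theta>0$. Factoring gives $\delta(\delta-2\cos\theta)>0$, and since $\delta>0$ by hypothesis this is equivalent to $\delta>2\cos\theta$, which settles the \emph{farther from $\mathbf{c}$} case. The symmetric case $y<x$ runs through the identical chain and reduces to $\delta(\delta-2\cos\theta)<0$, hence $\delta<2\cos\theta$.

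The angular restriction in the second case is not an extra hypothesis but a consequence I would surface explicitly: since $\delta>0$, the inequality $\delta<2\cos\theta$ can hold only if $2\cos\theta>0$, i.e. $\cos\theta>0$, which is precisely $-\tfrac{\pi}{2}<\theta<\tfrac{\pi}{2}$. There is no genuine obstacle here; the entire content is the law of cosines plus the sign of the quadratic $\delta(\delta-2\cos\theta)$. The only points needing a word of care are the implicit nondegeneracy assumption $x>0$ used to divide through, and the observation that the probabilities $p_{\mathbf{c}},p_{\mathbf{a}}$ named in the statement play no role in the inequality itself: by monotonicity of the CDF $F_{\mathbf{b}}$, any distance comparison transfers immediately to the corresponding probability comparison, so those definitions can be carried along without affecting the argument.
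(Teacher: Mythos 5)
Your proposal is correct and follows essentially the same route as the paper's proof: the same vector decomposition $\mathbf{b}-\mathbf{c}=(\mathbf{b}-\mathbf{a})-(\mathbf{c}-\mathbf{a})$, the same law-of-cosines expansion $y^{2}=\delta^{2}x^{2}+x^{2}-2\delta x^{2}\cos\theta$, and the same sign analysis of $\delta(\delta-2\cos\theta)$ using $\delta>0$ and $x>0$. If anything, you are slightly more careful than the paper, since you explicitly derive the angular restriction $-\tfrac{\pi}{2}<\theta<\tfrac{\pi}{2}$ from $0<\delta<2\cos\theta$, a step the paper leaves implicit under ``following a similar approach.''
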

\begin{proof}
	Since $\theta$ is the angle between $\mathbf{b}-\mathbf{a}$ and $\mathbf{c}-\mathbf{a}$, by construction, we have:
	\begin{equation}\label{eq:y_distance}
		\begin{aligned}
			\Vert\mathbf{b}-\mathbf{c}\Vert^2
			& = \Vert\mathbf{b}-\mathbf{a}+\mathbf{a}-\mathbf{c}\Vert^2,                                                  \\
			& = \Vert\mathbf{b}-\mathbf{a}\Vert^2 + \Vert\mathbf{c}-\mathbf{a}\Vert^2 - 2(\mathbf{b}-\mathbf{a}).(\mathbf{c}-\mathbf{a}), \\
			& = \delta^2x^2 + x^2 - 2\delta x^2\cos\theta.                                                
		\end{aligned}
	\end{equation}
	In the first case where $y>x$ ($\mathbf{b}$ is perturbed farther from $\mathbf{c}$) we have,
	\begin{equation}
		\begin{aligned}
			\Vert\mathbf{b}-\mathbf{c}\Vert^2
			& > \Vert\mathbf{a}-\mathbf{c}\Vert^2,\\
			\delta^2x^2 + x^2 - 2\delta x^2\cos\theta & > x^2,\\
			\delta x^2(\delta - 2\cos\theta)& > 0.
		\end{aligned}
	\end{equation}
	Since $\delta>0$ and $x>0$, we have 
	\begin{equation}\label{eq:xyz_distances_farther}
		\begin{aligned}
			\delta & > 2\cos\theta. 
		\end{aligned}
	\end{equation}
	Following a similar approach, for the case where $y<x$ ($\mathbf{b}$ is perturbed towards $\mathbf{c}$) we obtain:
	\begin{equation}\label{eq:xyz_distances_towards}
		\begin{aligned}
			\delta& < 2\cos\theta~\text{and}~-\frac{\pi}{2}<\theta<\frac{\pi}{2}. 
		\end{aligned}
	\end{equation}
\end{proof}

\section{Theoretical Bounds on LID}
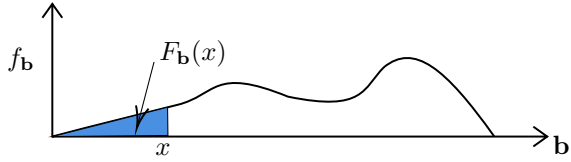
\begin{figure}[t]
	\centering
	\begin{tikzpicture}[x=0.75pt,y=0.75pt,yscale=-1,xscale=1]
		\draw [color={rgb, 255:red, 0; green, 0; blue, 0 }  ,draw opacity=1 ][line width=0.75]  (29.2,183.16) -- (278.8,183.16)(29.2,116.2) -- (29.2,183.6) (271.8,178.16) -- (278.8,183.16) -- (271.8,188.16) (24.2,123.2) -- (29.2,116.2) -- (34.2,123.2)  ;
		\draw [line width=0.75]    (29.2,183.16) .. controls (49.88,177.63) and (76.63,171.13) .. (92.38,167.13) .. controls (111.98,162.32) and (109.91,147.52) .. (148,163) .. controls (186.8,171) and (186,158.6) .. (196.4,148.2) .. controls (211.6,135.8) and (227.2,148.2) .. (252,183) ;
		\draw  [fill={rgb, 255:red, 74; green, 144; blue, 226 }  ,fill opacity=1 ] (29.2,183.16) -- (87.32,168.37) -- (87.5,182.61) -- cycle ;
		\draw    (80.5,145.5) -- (72.34,176.9) ;
		\draw [shift={(71.83,178.83)}, rotate = 284.57] [color={rgb, 255:red, 0; green, 0; blue, 0 }  ][line width=0.75]    (10.93,-3.29) .. controls (6.95,-1.4) and (3.31,-0.3) .. (0,0) .. controls (3.31,0.3) and (6.95,1.4) .. (10.93,3.29)   ;
		
		\draw (82,133.2) node [anchor=north west][inner sep=0.75pt]    {$F_{\mathbf{b}}(x)$};
		\draw (80,185.4) node [anchor=north west][inner sep=0.75pt]    {$x$};
		\draw (280.4,181.4) node [anchor=north west][inner sep=0.75pt]    {$\mathbf{b}$};
		\draw (5.2,137) node [anchor=north west][inner sep=0.75pt]    {$f_{\mathbf{b}}$};
	\end{tikzpicture}
	\caption{The probability distribution of distances from $\mathbf{b}\in \mathcal{S}$ induced by $\mathcal{D}_\mathbf{b}$.}
	\label{fig:distance distribution}
\end{figure}

We now consider the relationship between local intrinsic dimensionality and the effect of perturbations on neighborhoods under appropriate smoothness conditions of the underlying distance distribution. First, we define the conditions under which the LID is continuous at $\mathbf{b}\in \mathcal{S}$ \citep{amsaleg2020high}:
\begin{enumerate}
	\item There exists a distance $y>0$ for which all points $\mathbf{c}\in\mathbb{R}^d$ with $\Vert \mathbf{c}-\mathbf{b}\Vert\leq y$ admit a distance distribution $\mathcal{D}_{\mathbf{b}}$ whose cumulative distribution function $F_{\mathbf{b}}$ is continuously differentiable and positive within some open interval with a lower bound of $0$.
	\item For any sequence $\mathbf{b}\rightarrow\mathbf{c}$ of points satisfying Condition 1, there is convergence in distribution of the sequence of random distance variables defined at $\mathbf{b}$ to the distance variable defined at $\mathbf{c}$; that is, the condition $\lim_{\mathbf{b}\to\mathbf{c}} F_{\mathbf{b}}(\epsilon)=F_{\mathbf{c}}(\epsilon)$ holds for any distance $\epsilon\in(0,y)$.
	\item For each $\mathbf{b}$ satisfying Condition 1, $\text{LID}(\mathbf{b})$ exists and is positive.
	\item $\lim_{\mathbf{b}\to\mathbf{c}} \text{LID}(\mathbf{b})=\text{LID}(\mathbf{c})$.
\end{enumerate}
For the remainder of this Section, we assume that the LID is continuous at point $\mathbf{b}\in \mathcal{S}$. In practice, we find that these technical conditions are often satisfied as illustrated by the experimental results in Section \ref{sec:results}.

The following theorem derives an upper-bound and a lower bound for the LID of a perturbed data point, $\mathbf{b}$, based on the distances to its original location, $\mathbf{a}$, and a second reference data point $\mathbf{c}$, which are shown in 
Fig. \ref{fig:perturbations_figure}.
\begin{theorem}\label{thm:main_theorem}
	Define $y=\Vert\mathbf{b}-\mathbf{c}\Vert$, $x=\Vert\mathbf{a}-\mathbf{c}\Vert$ and $\delta x=\Vert\mathbf{b}-\mathbf{a}\Vert$. Let $F_{\mathbf{b}}(\delta x)$ be the cumulative probability of the distance distribution $\mathcal{D}_{\mathbf{b}}$ at distance $\delta x$. Similarly, let $F_{\mathbf{b}}(y)$ be the cumulative probability of the distance distribution $\mathcal{D}_{\mathbf{b}}$ at distance $y$. Thus, $p_{\mathbf{a}}=F_{\mathbf{b}}(\delta x)$ and $p_{\mathbf{c}}=F_{\mathbf{b}}(y)$ are the distributional ranks of $\mathbf{a}$ and $\mathbf{c}$ w.r.t the distribution of distances from $\mathbf{b}\in\mathcal{S}$. Similarly, define $k_{\mathbf{a}}=np_{\mathbf{a}}$ and $k_{\mathbf{c}}=np_{\mathbf{c}}$ as the corresponding expected ranks. Let $\delta$ be a real constant that is bounded as $0<\delta<y/x$. Assume $\text{LID}(\mathbf{b})$ exists and is positive. Let $\eta$ be a sufficiently small real value that is bounded as $0<\eta<\min\big(\frac{y}{\delta x}-1,\text{LID}(\mathbf{b})\ln(\varphi)/\vert \ln(\frac{\delta x}{y})\vert\big)$, where $\varphi=\text{min}\{\frac{y+\delta x\eta}{y},\frac{y}{y-\delta x\eta}\}$. For feasible values of $\eta$, there exists a positive integer $n_{0}>\max\{k_{\mathbf{a}}, k_{\mathbf{c}}\}$ for which the following inequalities hold for all choices of integer $n\geq n_{0}$,
	\begin{equation}\label{eq:lid_y_inequality}
		\begin{aligned}
			\dfrac{\ln\big(F_{\mathbf{b}}(y)\big/F_{\mathbf{b}}(\delta x)\big)}{\ln(\frac{y}{\delta x}+\eta)}\leq \text{LID}(\mathbf{b})\leq\dfrac{\ln\big(F_{\mathbf{b}}(y)\big/F_{\mathbf{b}}(\delta x)\big)}{\ln(\frac{y}{\delta x}-\eta)}. 
		\end{aligned}
	\end{equation}
\end{theorem}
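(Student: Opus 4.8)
The plan is to turn the Local ID Representation theorem into an exact identity relating $F_{\mathbf b}(y)$, $F_{\mathbf b}(\delta x)$ and $\text{LID}(\mathbf b)$, and then to show that the multiplicative correction factor $G$ it produces is small enough to be absorbed into the $\pm\eta$ perturbations of the denominator.

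First I would check the hypotheses. Since $0<\delta<y/x$ we have $\delta x<y$, so $[\delta x,y]$ is a nondegenerate interval; the continuity Conditions~1--4 and the positivity of $\text{LID}(\mathbf b)$ ensure $F_{\mathbf b}$ is positive and continuously differentiable there and that $\text{ID}_{F_{\mathbf b}}(0)=\text{LID}(\mathbf b)$ exists, so the representation theorem applies with $r=y$ and $w=\delta x$. This gives
\[
\frac{F_{\mathbf b}(y)}{F_{\mathbf b}(\delta x)}=\left(\frac{y}{\delta x}\right)^{\text{LID}(\mathbf b)}G_{F_{\mathbf b},\delta x}(y),
\]
and taking logarithms linearises it into
\[
\ln\!\left(F_{\mathbf b}(y)/F_{\mathbf b}(\delta x)\right)=\text{LID}(\mathbf b)\ln\frac{y}{\delta x}+\ln G_{F_{\mathbf b},\delta x}(y).
\]

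The heart of the argument is a two-sided bound on the correction term $\ln G_{F_{\mathbf b},\delta x}(y)=-\int_{\delta x}^{y}\frac{\text{LID}(\mathbf b)-\text{ID}_{F_{\mathbf b}}(t)}{t}\diff t$. I would bound its magnitude by $\big(\sup_{t\in[\delta x,y]}\lvert\text{LID}(\mathbf b)-\text{ID}_{F_{\mathbf b}}(t)\rvert\big)\ln(y/\delta x)$, and then argue that, under the continuity conditions and once $n\geq n_0$ confines the rank-$k_{\mathbf a},k_{\mathbf c}$ neighbourhood to the region where $\text{ID}_{F_{\mathbf b}}$ sits within the prescribed tolerance of $\text{LID}(\mathbf b)$, this supremum is at most $\eta$. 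Combined with the feasibility bound $\eta<\text{LID}(\mathbf b)\ln(\varphi)/\lvert\ln(\delta x/y)\rvert$, this yields $\lvert\ln G_{F_{\mathbf b},\delta x}(y)\rvert\leq\text{LID}(\mathbf b)\ln\varphi$.

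Finally I would translate this single estimate into the two stated bounds. Substituting the log-linear identity into the target inequalities and cancelling, the lower bound on $\text{LID}(\mathbf b)$ reduces to $\ln G\leq\text{LID}(\mathbf b)\ln\frac{y+\delta x\eta}{y}$ and the upper bound to $\ln G\geq\text{LID}(\mathbf b)\ln\frac{y-\delta x\eta}{y}$; since $\varphi=\min\{\frac{y+\delta x\eta}{y},\frac{y}{y-\delta x\eta}\}$, both follow from $\lvert\ln G\rvert\leq\text{LID}(\mathbf b)\ln\varphi$. The constraint $\eta<y/(\delta x)-1$ guarantees $\frac{y}{\delta x}-\eta>1$, so the denominators $\ln(\frac{y}{\delta x}\pm\eta)$ are positive and dividing through preserves the inequality directions. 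I expect the main obstacle to be the middle step: establishing that $\text{ID}_{F_{\mathbf b}}(t)$ really does stay within $\eta$ of $\text{LID}(\mathbf b)$ across the entire interval $[\delta x,y]$ and not merely in the limit $t\to0$, which is exactly what the smoothness Conditions~1--4 and the choice of $n_0$ must be used to secure.
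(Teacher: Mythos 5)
Your proposal is correct and takes essentially the same route as the paper's proof: both apply the Local ID Representation theorem to get the exact identity $F_{\mathbf{b}}(y)/F_{\mathbf{b}}(\delta x)=(y/\delta x)^{\text{LID}(\mathbf{b})}G_{F_{\mathbf{b}},\delta x}(y)$, bound the correction term via $\big\vert\ln G\big\vert\leq\eta\,\vert\ln(\delta x/y)\vert\leq\text{LID}(\mathbf{b})\ln\varphi$ using the EVT tolerance on $\vert\text{LID}(\mathbf{b})-\text{ID}_{F_{\mathbf{b}}}(u)\vert$ over the interval (secured by the choice of $n_{0}$), and then exploit the min-structure of $\varphi$ to absorb the error into the $\pm\eta$ denominators. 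The only difference is presentational: you work backward from the target inequalities to the two conditions $\ln G\leq\text{LID}(\mathbf{b})\ln\frac{y+\delta x\eta}{y}$ and $\ln G\geq\text{LID}(\mathbf{b})\ln\frac{y-\delta x\eta}{y}$, whereas the paper derives the sandwich $\ln\big(\frac{y}{\delta x\varphi}\big)\leq\frac{1}{\text{LID}(\mathbf{b})}\ln\big(F_{\mathbf{b}}(y)/F_{\mathbf{b}}(\delta x)\big)\leq\ln\big(\frac{y\varphi}{\delta x}\big)$ and rearranges forward; the mathematical content, including your correctly flagged subtlety about the tolerance holding on all of $[\delta x,y]$ rather than only as $t\to 0$, is the same.
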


\begin{proof}
	For a given choice of $n$, with $\mathbf{a}, \mathbf{b}, \mathbf{c}, x$ and $\delta$ as defined above, we have $p_{\mathbf{a}}=k_{\mathbf{a}}/n$ and $p_{\mathbf{c}}=k_{\mathbf{c}}/n$. Using the local ID characterization formula of the representation theorem (i.e., Theorem \ref{thm:theorem_1}), we observe that
	\begin{equation}
		\dfrac{k_{\mathbf{c}}}{k_{\mathbf{a}}}=\dfrac{p_{\mathbf{c}}}{p_{\mathbf{a}}}=
		\dfrac{F_{\mathbf{b}}(y)}{F_{\mathbf{b}}(\delta x)}=\bigg(\dfrac{y}{\delta x}\bigg)^{\text{LID}(\mathbf{b})}\cdot G_{F_{\mathbf{b},\delta x}}(y),
	\end{equation}
	where $G_{F_{\mathbf{b},\delta x}}(r)\triangleq\exp\bigg(\mathop{\mathlarger{\int\limits_{y}^{\delta x}}} \dfrac{\text{LID}(\mathbf{b})-\text{ID}_{F_{\mathbf{b}}}(u)}{u}du\bigg)$. 
	Upon rearranging, we obtain 
	\begin{equation}
		\dfrac{y}{\delta x}=\bigg(\dfrac{F_{\mathbf{b}}(y)}{F_{\mathbf{b}}(\delta x)\cdot G_{F_{\mathbf{b},\delta x}}(y)}\bigg)^{1/\text{LID}(\mathbf{b})}.
	\end{equation}
	Following a logarithmic transformation and substituting for $G_{F_{\mathbf{b},\delta x}}(y)$, we have
	\begin{equation}\label{eq:integral}
		\ln\Big(\frac{y}{\delta x}\Big)=
		\frac{1}{\text{LID}(\mathbf{b})}\ln\Big(\frac{F_{\mathbf{b}}(y)}{F_{\mathbf{b}}(\delta x)}\Big)-
		\frac{1}{\text{LID}(\mathbf{b})}\mathop{\mathlarger{\int\limits_{y}^{\delta x}}} \dfrac{\text{LID}(\mathbf{b})-\text{ID}_{F_{\mathbf{b}}}(u)}{u}\diff u.
	\end{equation}
	It is assumed that $F_{\mathbf{b}}(y)\neq F_{\mathbf{\delta x}}(y)$, which implies $y\neq \delta x$. Therefore, as per \citet{houle2017local}, for some real value $\eta$ such that 
	$0<\eta<\frac{y}{\delta x}-1$, there must exist a sufficiently small distance value $0<w$ such that any distance $u$ where $0<u<w$, implies $\vert\text{LID}(\mathbf{b})-\text{ID}_{F_{\mathbf{b}}(u)}\vert<\eta$. Therefore, the integral in \eqref{eq:integral} is bounded above by
	\begin{equation}
		\Bigg\vert\mathop{\mathlarger{\int\limits_{y}^{\delta x}}} \dfrac{\text{LID}(\mathbf{b})-\text{ID}_{F_{\mathbf{b}}}(u)}{u}\diff u\Bigg\vert\leq
		\eta\cdot\Bigg\vert\mathop{\mathlarger{\int\limits_{y}^{\delta x}}}\frac{1}{u}\diff u\Bigg\vert.
	\end{equation}
	Furthermore, since $\eta$ is defined as a small positive value, we bound $\eta$ from above as
	\begin{equation}
		\eta\leq
		\dfrac{\text{LID}(\mathbf{b})\ln(\varphi)}{\vert \ln(\frac{\delta x}{y})\vert},
	\end{equation}
	where $\varphi$ is the real valued variable
	\begin{equation}
		\varphi=\text{min}\Bigg\{\frac{y+\delta x\eta}{y},\frac{y}{y-\delta x\eta}\Bigg\}>1.
	\end{equation}	
	Define $n_{0}$ as the minimum size of the data set for which $F_{b}$ is strictly positive and continuously differentiable within some open interval of distances with lower endpoint $0$. For choices of $\eta$ as defined above, whenever the size of the data set $n$ is beyond the minimum ($n\geq n_{0}$), we have,
	\begin{equation}\label{eq:main_vert_ineq}
		\begin{aligned}
			\Bigg\vert\ln\big(\frac{y}{\delta x}\big)-\frac{1}{\text{LID}(\mathbf{b})}\ln\Big(\frac{F_{\mathbf{b}}(y)}{F_{\mathbf{b}}(\delta x)}\Big)\Bigg\vert
			\leq & \frac{1}{\text{LID}(\mathbf{b})}\Bigg\vert\mathop{\mathlarger{\int\limits_{y}^{\delta x}}} \dfrac{\text{LID}(\mathbf{b})-\text{ID}_{F_{\mathbf{b}}}(u)}{u}\diff u\Bigg\vert,\\
			\leq & \frac{\eta}{\text{LID}(\mathbf{b})}\Bigg\vert\mathop{\mathlarger{\int\limits_{y}^{\delta x}}}\frac{1}{u}\diff u\Bigg\vert,\\
			= & \frac{\eta}{\text{LID}(\mathbf{b})}\Big\vert\ln\Big(\frac{\delta x}{y}\Big)\Big\vert,\\
			\leq & \dfrac{\ln{\varphi}}{\vert \ln(\frac{\delta x}{y})\vert}\Big\vert\ln\Big(\frac{\delta x}{y}\Big)\Big\vert,\\
			= & \ln\varphi.
		\end{aligned}
	\end{equation}
	By rearranging \eqref{eq:main_vert_ineq}, the we obtain
	\begin{equation}
		\begin{aligned}
			\ln\left(\frac{y}{\delta x\varphi}\right)
			& \leq
			\dfrac{1}{\text{LID}(\mathbf{b})}\ln\Big(\frac{F_{\mathbf{b}}(y)}{F_{\mathbf{b}}(\delta x)}\Big)
			&& \leq
			\ln\left(\frac{y\varphi}{\delta x}\right)\\
			\ln\Big(\frac{y}{\delta x}\cdot\frac{y-\delta x\eta}{y}\Big)
			& \leq	\dfrac{1}{\text{LID}(\mathbf{b})}\ln\Big(\frac{F_{\mathbf{b}}(y)}{F_{\mathbf{b}}(\delta x)}\Big)
			&& \leq
			\ln\Big(\frac{y}{\delta x}\cdot\frac{y+\delta x\eta}{y}\Big)\\
			\ln\Big(\frac{y}{\delta x}-\eta\Big)
			& \leq
			\dfrac{1}{\text{LID}(\mathbf{b})}\ln\Big(\frac{F_{\mathbf{b}}(y)}{F_{\mathbf{b}}(\delta x)}\Big)
			&& \leq
			\ln\Big(\frac{y}{\delta x}+\eta\Big).
		\end{aligned}
	\end{equation}
	
	Upon rearranging, we obtain
	\begin{equation}\label{eq:inequalities_lid_2}
		\dfrac{\ln\big(F_{\mathbf{b}}(y)\big/F_{\mathbf{b}}(\delta x)\big)}{\ln\Big(\frac{y}{\delta x}+\eta\Big)}\leq
		\text{LID}(\mathbf{b})\leq
		\dfrac{\ln\big(F_{\mathbf{b}}(y)\big/F_{\mathbf{b}}(\delta x)\big)}{\ln\Big(\frac{y}{\delta x}-\eta\Big)}.
	\end{equation}
\end{proof}

Theorem \ref{thm:main_theorem} derives a lower-bound and an upper-bound for $\text{LID}(\mathbf{b})$ using the cumulative probability values (of the distribution of distances from $\mathbf{b}$) of the remaining two data points and the distances to them. Since the maximum value $F_{\mathbf{b}}(\delta x)$ can have is one, we observe a strong association between the size of perturbations $\delta$ and the LID value of $\mathbf{b}$. This theoretical result justifies the use of LID as an indicator to detect adversarial perturbations in prior works (Section \ref{sec:lit_review}).

\subsection{Additional Bounds based on the Perturbation Directions}
\begin{figure}[h]
	\centering
	\begin{subfigure}{0.4\textwidth} 
		\tikzset{every picture/.style={line width=0.75pt}} 
		\begin{tikzpicture}[x=0.75pt,y=0.75pt,yscale=-1,xscale=1]
			\draw  [fill={rgb, 255:red, 155; green, 155; blue, 155 }  ,fill opacity=0.36 ][dash pattern={on 0.84pt off 2.51pt}] (523.75,141.33) .. controls (517.35,149.64) and (507.3,155) .. (496,155) .. controls (476.67,155) and (461,139.33) .. (461,120) .. controls (461,100.67) and (476.67,85) .. (496,85) .. controls (506.21,85) and (515.41,89.38) .. (521.8,96.35) -- (496,120) -- cycle ;
			\draw    (496,120) -- (597.33,120.33) ;
			\draw [shift={(597.33,120.33)}, rotate = 0.19] [color={rgb, 255:red, 0; green, 0; blue, 0 }  ][fill={rgb, 255:red, 0; green, 0; blue, 0 }  ][line width=0.75]      (0, 0) circle [x radius= 3.35, y radius= 3.35]   ;
			\draw    (442.44,81.74) -- (496,120) ;
			\draw [shift={(496,120)}, rotate = 35.54] [color={rgb, 255:red, 0; green, 0; blue, 0 }  ][fill={rgb, 255:red, 0; green, 0; blue, 0 }  ][line width=0.75]      (0, 0) circle [x radius= 3.35, y radius= 3.35]   ;
			\draw [shift={(440,80)}, rotate = 35.54] [fill={rgb, 255:red, 0; green, 0; blue, 0 }  ][line width=0.08]  [draw opacity=0] (8.93,-4.29) -- (0,0) -- (8.93,4.29) -- cycle    ;
			\draw  [draw opacity=0] (479.38,107.16) .. controls (481.16,101.53) and (485.65,97.2) .. (491.94,95.97) .. controls (502.33,93.93) and (513.66,101.09) .. (517.26,111.98) .. controls (518.09,114.48) and (518.43,116.97) .. (518.35,119.33) -- (498.45,115.67) -- cycle ; \draw   (479.38,107.16) .. controls (481.16,101.53) and (485.65,97.2) .. (491.94,95.97) .. controls (502.33,93.93) and (513.66,101.09) .. (517.26,111.98) .. controls (518.09,114.48) and (518.43,116.97) .. (518.35,119.33) ;
			\draw    (496,90) -- (496,150) ;
			
			\draw (477,112.4) node [anchor=north west][inner sep=0.75pt]    {$\mathbf{a}$};
			\draw (600,110.4) node [anchor=north west][inner sep=0.75pt]    {$\mathbf{c}$};
			\draw (457,62.4) node [anchor=north west][inner sep=0.75pt]    {$\mathbf{b}$};
			\draw (557,122.4) node [anchor=north west][inner sep=0.75pt]    {$x$};
			\draw (523.8,99.75) node [anchor=north west][inner sep=0.75pt]    {$\theta $};
			\draw (431,102.4) node [anchor=north west][inner sep=0.75pt]    {$\delta x$};
		\end{tikzpicture}
	\end{subfigure}
	\begin{subfigure}{0.5\textwidth} 
		\begin{tikzpicture}[x=0.75pt,y=0.75pt,yscale=-1,xscale=1]
			\draw    (70,40) -- (280,40) ;
			\draw  [color={rgb, 255:red, 74; green, 144; blue, 226 }  ,draw opacity=1 ] (132.88,39.88) .. controls (132.88,35.94) and (136.06,32.75) .. (140,32.75) .. controls (143.94,32.75) and (147.13,35.94) .. (147.13,39.88) .. controls (147.13,43.81) and (143.94,47) .. (140,47) .. controls (136.06,47) and (132.88,43.81) .. (132.88,39.88) -- cycle ;
			\draw  [color={rgb, 255:red, 74; green, 144; blue, 226 }  ,draw opacity=1 ] (273,39.88) .. controls (273,35.94) and (276.19,32.75) .. (280.13,32.75) .. controls (284.06,32.75) and (287.25,35.94) .. (287.25,39.88) .. controls (287.25,43.81) and (284.06,47) .. (280.13,47) .. controls (276.19,47) and (273,43.81) .. (273,39.88) -- cycle ;
			\draw    (210,30) -- (210,50) ;
			\draw    (70,30) -- (70,50) ;
			\draw    (140,30) -- (140,50) ;
			\draw    (280,30) -- (280,50) ;
			\draw    (70,40) -- (140,39.88) (79.99,35.98) -- (80.01,43.98)(89.99,35.96) -- (90.01,43.96)(99.99,35.95) -- (100.01,43.95)(109.99,35.93) -- (110.01,43.93)(119.99,35.91) -- (120.01,43.91)(129.99,35.89) -- (130.01,43.89)(139.99,35.88) -- (140.01,43.87) ;
			\draw    (210,40) -- (280,40) (220,36) -- (220,44)(230,36) -- (230,44)(240,36) -- (240,44)(250,36) -- (250,44)(260,36) -- (260,44)(270,36) -- (270,44) ;
			
			\draw (154.5,13.4) node [anchor=north west][inner sep=0.75pt]    {$\text{LID}(\mathbf{b})$};
			\draw (203,12) node [anchor=north west][inner sep=0.75pt]   [align=left] {upper-bound};
			\draw (64,12) node [anchor=north west][inner sep=0.75pt]   [align=left] {lower-bound};
		\end{tikzpicture}
	\end{subfigure}
	\caption{The figure on the left shows the valid range of $\theta$ (grey) when $\mathbf{b}$ is perturbed away from $\mathbf{c}$. The figure on the right shows the least lower and upper bounds when $\mathbf{b}$ is perturbed away from $\mathbf{c}$.}
	\label{fig:perturbation_away_angle}
\end{figure}
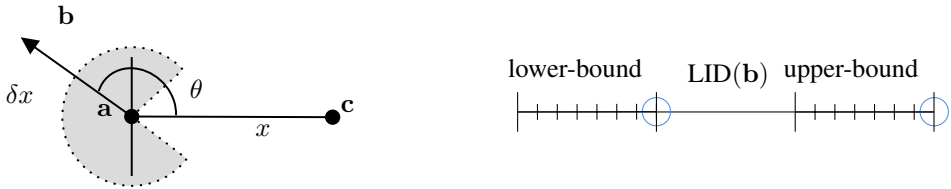

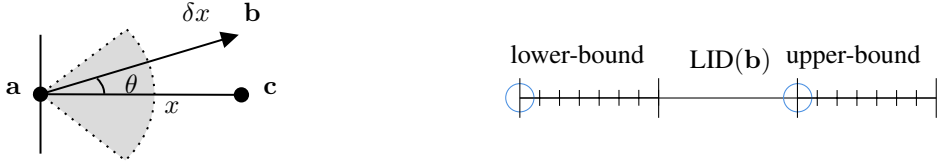
\begin{figure}[t]
	\centering
	\begin{subfigure}{0.4\textwidth} 
		\tikzset{every picture/.style={line width=0.75pt}} 
		\begin{tikzpicture}[x=0.75pt,y=0.75pt,yscale=-1,xscale=1]
			\draw  [fill={rgb, 255:red, 155; green, 155; blue, 155 }  ,fill opacity=0.36 ][dash pattern={on 0.84pt off 2.51pt}] (133.49,47.31) .. controls (142.2,56.08) and (147.47,67.5) .. (147.47,80) .. controls (147.47,92.86) and (141.9,104.58) .. (132.73,113.44) -- (90,80) -- cycle ;
			\draw    (90,80) -- (191.33,80.33) ;
			\draw [shift={(191.33,80.33)}, rotate = 0.19] [color={rgb, 255:red, 0; green, 0; blue, 0 }  ][fill={rgb, 255:red, 0; green, 0; blue, 0 }  ][line width=0.75]      (0, 0) circle [x radius= 3.35, y radius= 3.35]   ;
			\draw    (187.13,50.86) -- (90,80) ;
			\draw [shift={(90,80)}, rotate = 163.3] [color={rgb, 255:red, 0; green, 0; blue, 0 }  ][fill={rgb, 255:red, 0; green, 0; blue, 0 }  ][line width=0.75]      (0, 0) circle [x radius= 3.35, y radius= 3.35]   ;
			\draw [shift={(190,50)}, rotate = 163.3] [fill={rgb, 255:red, 0; green, 0; blue, 0 }  ][line width=0.08]  [draw opacity=0] (8.93,-4.29) -- (0,0) -- (8.93,4.29) -- cycle    ;
			\draw  [draw opacity=0] (118.52,72.06) .. controls (120.04,73.58) and (121.14,75.26) .. (121.71,77.02) .. controls (122.01,77.96) and (122.14,78.88) .. (122.12,79.77) -- (101.91,76.81) -- cycle ; \draw   (118.52,72.06) .. controls (120.04,73.58) and (121.14,75.26) .. (121.71,77.02) .. controls (122.01,77.96) and (122.14,78.88) .. (122.12,79.77) ;
			\draw    (90,50) -- (90,110) ;
			
			\draw (71,72.4) node [anchor=north west][inner sep=0.75pt]    {$\mathbf{a}$};
			\draw (201,72.4) node [anchor=north west][inner sep=0.75pt]    {$\mathbf{c}$};
			\draw (191,32.4) node [anchor=north west][inner sep=0.75pt]    {$\mathbf{b}$};
			\draw (151,82.4) node [anchor=north west][inner sep=0.75pt]    {$x$};
			\draw (131,68.4) node [anchor=north west][inner sep=0.75pt]    {$\theta $};
			\draw (160,32.4) node [anchor=north west][inner sep=0.75pt]    {$\delta x$};
		\end{tikzpicture}
	\end{subfigure}
	\begin{subfigure}{0.5\columnwidth} 
		\begin{tikzpicture}[x=0.75pt,y=0.75pt,yscale=-1,xscale=1]
			
			\draw    (70,40) -- (280,40) ;
			\draw  [color={rgb, 255:red, 74; green, 144; blue, 226 }  ,draw opacity=1 ] (62.88,39.88) .. controls (62.88,35.94) and (66.06,32.75) .. (70,32.75) .. controls (73.94,32.75) and (77.13,35.94) .. (77.13,39.88) .. controls (77.13,43.81) and (73.94,47) .. (70,47) .. controls (66.06,47) and (62.88,43.81) .. (62.88,39.88) -- cycle ;
			\draw  [color={rgb, 255:red, 74; green, 144; blue, 226 }  ,draw opacity=1 ] (203,39.88) .. controls (203,35.94) and (206.19,32.75) .. (210.13,32.75) .. controls (214.06,32.75) and (217.25,35.94) .. (217.25,39.88) .. controls (217.25,43.81) and (214.06,47) .. (210.13,47) .. controls (206.19,47) and (203,43.81) .. (203,39.88) -- cycle ;
			\draw    (210,30) -- (210,50) ;
			\draw    (70,30) -- (70,50) ;
			\draw    (140,30) -- (140,50) ;
			\draw    (280,30) -- (280,50) ;
			\draw    (70,40) -- (140,39.88) (79.99,35.98) -- (80.01,43.98)(89.99,35.96) -- (90.01,43.96)(99.99,35.95) -- (100.01,43.95)(109.99,35.93) -- (110.01,43.93)(119.99,35.91) -- (120.01,43.91)(129.99,35.89) -- (130.01,43.89)(139.99,35.88) -- (140.01,43.87) ;
			\draw    (210,40) -- (280,40) (220,36) -- (220,44)(230,36) -- (230,44)(240,36) -- (240,44)(250,36) -- (250,44)(260,36) -- (260,44)(270,36) -- (270,44) ;
			
			\draw (154.5,13.4) node [anchor=north west][inner sep=0.75pt]    {$\text{LID}(\mathbf{b})$};
			\draw (203,12) node [anchor=north west][inner sep=0.75pt]   [align=left] {upper-bound};
			\draw (64,12) node [anchor=north west][inner sep=0.75pt]   [align=left] {lower-bound};
		\end{tikzpicture}
	\end{subfigure}
	\caption{The figure on the left shows the valid range of $\theta$ (grey) when $\mathbf{b}$ is perturbed towards $\mathbf{c}$. The figure on the right shows the least lower and upper bounds when $\mathbf{b}$ is perturbed towards $\mathbf{c}$.}
	\label{fig:perturbation_towards_angle}
\end{figure}

We now consider the relationship between LID and the direction of perturbations. The bounds in Theorem \ref{thm:main_theorem} hold for $\mathbf{b}$ irrespective of the perturbation directions. In Lemma \ref{lem:1}, we present the conditions $\delta$ and $\theta$ must satisfy given the two possible directions of perturbation. In the following theorem, we fix $\delta$, and consider the possible angles $\theta$ can take to obtain the greatest lower-bound and greatest lower-bound of \eqref{eq:lid_y_inequality} when $\mathbf{b}$ is perturbed away from $\mathbf{c}$ (relative to $\mathbf{a}$) (refer Figure \ref{fig:perturbation_away_angle}). Conversely, we obtain the least lower-bound and least upper-bound of \eqref{eq:lid_y_inequality} when $\mathbf{b}$ is perturbed towards $\mathbf{c}$ (refer Figure \ref{fig:perturbation_towards_angle}).

\begin{theorem}\label{thm:bounds}
	Denote the distances $x=\Vert \mathbf{a}-\mathbf{c}\Vert$, $\Vert \mathbf{b}-\mathbf{c}\Vert$ as $y$ and $\delta x=\Vert \mathbf{a}-\mathbf{b}\Vert$ for an appropriate choice of $0<\delta<y/x$. We denote the distributional ranks of $\mathbf{a}$ and $\mathbf{c}$ as $p_{\mathbf{a}}=F_{\mathbf{b}}(\delta x)$ and $p_{\mathbf{c}}=F_{\mathbf{b}}(y)$, and the distribution of distances from $\mathbf{b}\in\mathcal{S}$ as $\mathcal{D}_{\mathbf{b}}$. Assume $\text{LID}(\mathbf{b})$ exists and is positive. Assume $\eta$ is a sufficiently small real value that satisfies the constraints outlined in Theorem \ref{thm:main_theorem}. Additionally, it satisfies the constraint $1-\frac{1}{\delta}<\eta<\frac{1}{\delta}-1$. 
	
	By considering the boundary values of $\cos\theta$ as per Lemma \ref{lem:1}, when $\mathbf{b}$ is perturbed farther from $\mathbf{c}$ (i.e., $y>x$), the greatest lower and upper bounds of \eqref{eq:inequalities_lid_2} are as follows:
	\begin{equation}
		\begin{aligned}\label{eq:lid_y_farther}
			\dfrac{\ln\big(F_{\mathbf{b}}(y)\big/F_{\mathbf{b}}(\delta x)\big)}{\ln(\frac{y}{\delta x}+\eta)}<  
			\dfrac{\ln\big(F_{\mathbf{b}}(y)\big/F_{\mathbf{b}}(\delta x)\big)}{\ln(\frac{1}{\delta}+\eta)}< 
			\text{LID}(\mathbf{b})\leq\\                                                               
			\dfrac{\ln\big(F_{\mathbf{b}}(y)\big/F_{\mathbf{b}}(\delta x)\big)}{\ln(\frac{y}{\delta x}-\eta)}<  
			\dfrac{\ln\big(F_{\mathbf{b}}(y)\big/F_{\mathbf{b}}(\delta x)\big)}{\ln(\frac{1}{\delta}-\eta)}.    
		\end{aligned}
	\end{equation}
	Conversely, when $\mathbf{b}$ is perturbed towards $\mathbf{c}$ (i.e., $y<x$), the least lower and upper bounds of \eqref{eq:inequalities_lid_2} are as follows:
	\begin{equation}
		\begin{aligned}\label{eq:lid_y_closer}
			\dfrac{\ln\big(F_{\mathbf{b}}(y)\big/F_{\mathbf{b}}(\delta x)\big)}{\ln(\frac{1}{\delta}+\eta)}<
			\dfrac{\ln\big(F_{\mathbf{b}}(y)\big/F_{\mathbf{b}}(\delta x)\big)}{\ln(\frac{y}{\delta x}+\eta)}\leq 
			\text{LID}(\mathbf{b})<\\
			\dfrac{\ln\big(F_{\mathbf{b}}(y)\big/F_{\mathbf{b}}(\delta x)\big)}{\ln(\frac{1}{\delta}-\eta)}<
			\dfrac{\ln\big(F_{\mathbf{b}}(y)\big/F_{\mathbf{b}}(\delta x)\big)}{\ln(\frac{y}{\delta x}-\eta)}.    
		\end{aligned}
	\end{equation}
\end{theorem}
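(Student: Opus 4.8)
The plan is to reduce everything to the single scalar $s := \frac{y}{\delta x}$ that controls the denominators in the bounds of Theorem \ref{thm:main_theorem}, and to track how $s$ compares to the boundary value $\frac{1}{\delta}$ as the perturbation direction changes. First I would reuse the law of cosines already recorded in the proof of Lemma \ref{lem:1}, namely $y^2 = x^2(\delta^2 + 1 - 2\delta\cos\theta)$, to write $s = \frac{1}{\delta}\sqrt{\delta^2 + 1 - 2\delta\cos\theta}$. The key observation is that the boundary case $\delta = 2\cos\theta$ of Lemma \ref{lem:1} is exactly the case $y = x$, i.e. $s = \frac{1}{\delta}$. Hence the Lemma \ref{lem:1} dichotomy translates directly into $s > \frac{1}{\delta}$ when $\mathbf{b}$ is perturbed farther from $\mathbf{c}$ ($y > x$), and $s < \frac{1}{\delta}$ when it is perturbed towards $\mathbf{c}$ ($y < x$).

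Next I would settle the sign bookkeeping so that no inequality flips. Since $0 < \delta < y/x$ gives $\delta x < y$ and $F_{\mathbf{b}}$ is increasing, the common numerator $N := \ln(F_{\mathbf{b}}(y)/F_{\mathbf{b}}(\delta x))$ is strictly positive, and $s > 1$. The $\eta$-constraints inherited from Theorem \ref{thm:main_theorem} already force $s - \eta > 1$, so $\ln(s \pm \eta) > 0$; the extra hypothesis $1 - \frac{1}{\delta} < \eta < \frac{1}{\delta} - 1$ (which, since $\eta > 0$, presupposes $\delta < 1$) is precisely what yields $\frac{1}{\delta} - \eta > 1$ and hence $\ln(\frac{1}{\delta} \pm \eta) > 0$ as well. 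With all four logarithms positive and $N > 0$, both maps $z \mapsto \frac{N}{\ln(z+\eta)}$ and $z \mapsto \frac{N}{\ln(z-\eta)}$ are strictly decreasing in $z$ on the relevant range, and $1 < s < \tfrac{1}{\delta}$ or $\tfrac{1}{\delta} < s$ is consistent with $\delta<1$ in the two cases respectively.

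The bounds then follow by feeding the direction-dependent ordering of $s$ and $\frac{1}{\delta}$ into this monotonicity, together with the sandwich $\frac{N}{\ln(s+\eta)} \le \text{LID}(\mathbf{b}) \le \frac{N}{\ln(s-\eta)}$ supplied by Theorem \ref{thm:main_theorem}. In the farther case $s > \frac{1}{\delta}$, monotonicity gives $\frac{N}{\ln(s+\eta)} < \frac{N}{\ln(1/\delta + \eta)}$ and $\frac{N}{\ln(s-\eta)} < \frac{N}{\ln(1/\delta - \eta)}$; splicing these against the Theorem \ref{thm:main_theorem} sandwich produces the ordering \eqref{eq:lid_y_farther}, exhibiting $\frac{N}{\ln(1/\delta \pm \eta)}$ as the greatest members of the bound family reached as $\theta$ attains its Lemma \ref{lem:1} limit. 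The towards case is symmetric: now $s < \frac{1}{\delta}$ reverses both monotonicity comparisons, so $\frac{N}{\ln(1/\delta \pm \eta)}$ become the least members, yielding \eqref{eq:lid_y_closer}.

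The step I expect to be the main obstacle is the honest placement of $\text{LID}(\mathbf{b})$ inside the displayed chains. Theorem \ref{thm:main_theorem} only pins $\text{LID}(\mathbf{b})$ between the bounds evaluated at the \emph{realized} $s$, and the $\frac{1}{\delta}$-expressions are strictly tighter than those on the side where LID is sandwiched. I would therefore be explicit that the $\frac{1}{\delta}$-terms are the extremal (boundary) members of the Theorem \ref{thm:main_theorem} bound family obtained by pushing $\theta$ to its Lemma \ref{lem:1} endpoint, rather than independently derived sandwiches of $\text{LID}(\mathbf{b})$; keeping each link of the chain rigorous amounts to labelling which inequality comes from Theorem \ref{thm:main_theorem} and which from the monotonicity comparison, and checking at every link that the positivity of the relevant logarithm—secured by the refined $\eta$ constraint—prevents a reversal of direction.
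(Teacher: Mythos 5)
Your proposal follows essentially the same route as the paper's proof: the law of cosines from Lemma~\ref{lem:1} gives $\frac{y}{\delta x}=\sqrt{1+\frac{1}{\delta^{2}}-\frac{2\cos\theta}{\delta}}$, the dichotomy $\cos\theta\lessgtr\delta/2$ places this ratio on the corresponding side of $\frac{1}{\delta}$, and monotonicity of $z\mapsto N/\ln(z\pm\eta)$, where $N=\ln\big(F_{\mathbf{b}}(y)/F_{\mathbf{b}}(\delta x)\big)>0$ and all four logarithms are positive under the stated $\eta$ constraints, converts that ordering into comparisons among the bounds, spliced against the sandwich of Theorem~\ref{thm:main_theorem}. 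This is exactly the paper's Case~1/Case~2 structure.

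The obstacle you flag in your final paragraph, however, is not a presentational quibble: it is a genuine gap in the paper's own proof, which asserts precisely the step you decline to make. In Case~1 the paper ``replaces'' $\ln(\frac{y}{\delta x}+\eta)$ by $\ln(\frac{1}{\delta}+\eta)$ and concludes $N/\ln(\frac{1}{\delta}+\eta)<\text{LID}(\mathbf{b})$; in Case~2 it symmetrically concludes $\text{LID}(\mathbf{b})<N/\ln(\frac{1}{\delta}-\eta)$. Neither follows: Theorem~\ref{thm:main_theorem} controls $\text{LID}(\mathbf{b})$ only through the bounds at the realized ratio $\frac{y}{\delta x}$, and in each case the $\frac{1}{\delta}$-term lies strictly on the side of that realized bound about which nothing is known (the other half of each chain --- the upper half of \eqref{eq:lid_y_farther} and the lower half of \eqref{eq:lid_y_closer} --- is fine, since there the $\frac{1}{\delta}$-term is a weakening of a valid bound). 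The unjustified inequality can in fact fail: in the farther case take $\theta=\pi$ and an ideal local distance distribution $F_{\mathbf{b}}(r)\propto r^{m}$ near zero, so that $G\equiv 1$ and $\text{LID}(\mathbf{b})=N/\ln\frac{y}{\delta x}$ exactly; then $\frac{y}{\delta x}=\frac{1}{\delta}+1>\frac{1}{\delta}+\eta$ (any admissible $\eta$ satisfies $\eta<\frac{1}{\delta}-1<1$ for $\delta\in(\frac{1}{2},1)$), whence $\text{LID}(\mathbf{b})<N/\ln(\frac{1}{\delta}+\eta)$, contradicting the middle inequality of \eqref{eq:lid_y_farther}. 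So your reading --- that the $\frac{1}{\delta}$-terms are the extremal members of the family of Theorem-\ref{thm:main_theorem} bounds attained as $\theta$ approaches the Lemma~\ref{lem:1} boundary, rather than independently valid sandwiches of $\text{LID}(\mathbf{b})$ itself --- is the statement that can actually be proved, and making that labelling explicit, as you propose, is a repair of the paper's argument rather than a mere clarification of it.
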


\begin{proof}
	We consider the two possible directions of perturbation separately.\\
	\textbf{Case 1: $\mathbf{b}$ perturbed farther from $\mathbf{c}$}\\
	First, we consider the case where $\mathbf{b}$ is perturbed farther from $\mathbf{c}$ (i.e., $y>x$). Using \eqref{eq:y_distance} on the denominator of the left inequality of \eqref{eq:inequalities_lid_2}, we obtain
	\begin{equation}
		\begin{aligned}
			\ln\Big(\frac{y}{\delta x}+\eta\Big)
			& =\ln\Big(                    
			\frac{\sqrt{\delta^2 x^2+x^2-2\delta x^2\cos\theta}}{\delta x}+\eta\Big),\\
			& =\ln\Big(                    
			\sqrt{1+\frac{1}{\delta^2}-\frac{2\cos\theta}{\delta}}+\eta\Big).
		\end{aligned}
	\end{equation}
	We then consider the upper boundary value of $\cos\theta$ as shown in \eqref{eq:xyz_distances_farther} (i.e., $\cos\theta<\delta/2$), where we get
	\begin{equation}
		\begin{aligned}
			\ln\Big(                    
			\sqrt{1+\frac{1}{\delta^2}-\frac{2\cos\theta}{\delta}}+\eta\Big)
			& >\ln\Big(                    
			\sqrt{1+\frac{1}{\delta^2}-1}+\eta\Big)\\
			& =\ln\left(\frac{1}{\delta}+\eta\right). 
		\end{aligned}
	\end{equation}
	Thus, for all feasible values of $\theta$, we observe that $\ln(\frac{y}{\delta x}+\eta)>\ln(\frac{1}{\delta}+\eta)$. Therefore, by replacing $\ln(\frac{y}{\delta x}+\eta)$ in \eqref{eq:inequalities_lid_2} with $\ln(\frac{1}{\delta}+\eta)$, we obtain the following greatest lower bound for $\text{LID}(\mathbf{b})$
	\begin{equation}\label{eq:inequalities_lid_lb}
		\dfrac{\ln\big(F_{\mathbf{b}}(y)\big/F_{\mathbf{b}}(\delta x)\big)}{\ln\Big(\frac{y}{\delta x}+\eta\Big)}<
		\dfrac{\ln\big(F_{\mathbf{b}}(y)\big/F_{\mathbf{b}}(\delta x)\big)}{\ln(\frac{1}{\delta}+\eta)}<
		\text{LID}(\mathbf{b}).
	\end{equation}
	The greatest upper bound of \eqref{eq:lid_y_farther} can be obtained by replacing $\ln\big(\frac{y}{\delta x}-\eta\big)$ with $\ln(\frac{1}{\delta}-\eta)$ following an identical approach. 
	\begin{equation}\label{eq:inequalities_lid_ub}
		\text{LID}(\mathbf{b})\leq
		\dfrac{\ln\big(F_{\mathbf{b}}(y)\big/F_{\mathbf{b}}(\delta x)\big)}{\ln(\frac{y}{\delta x}-\eta)}<
		\dfrac{\ln\big(F_{\mathbf{b}}(y)\big/F_{\mathbf{b}}(\delta x)\big)}{\ln(\frac{1}{\delta}-\eta)}.
	\end{equation}
	
	Thus, from \eqref{eq:inequalities_lid_lb} and \eqref{eq:inequalities_lid_ub}, we see that when $\mathbf{b}$ is perturbed farther from the reference sample $\mathbf{c}$, the greatest lower and upper bounds of \eqref{eq:inequalities_lid_2} are as follows:
	\begin{equation}\label{eq:greatest_lower_and_upper}
		\begin{aligned}
			\dfrac{\ln\big(F_{\mathbf{b}}(y)\big/F_{\mathbf{b}}(\delta x)\big)}{\ln(\frac{y}{\delta x}+\eta)}<  
			\dfrac{\ln\big(F_{\mathbf{b}}(y)\big/F_{\mathbf{b}}(\delta x)\big)}{\ln(\frac{1}{\delta}+\eta)}<
			\text{LID}(\mathbf{b})\leq\\                                                               
			\dfrac{\ln\big(F_{\mathbf{b}}(y)\big/F_{\mathbf{b}}(\delta x)\big)}{\ln(\frac{y}{\delta x}-\eta)}<  
			\dfrac{\ln\big(F_{\mathbf{b}}(y)\big/F_{\mathbf{b}}(\delta x)\big)}{\ln(\frac{1}{\delta}-\eta)}.    
		\end{aligned}
	\end{equation}
	\textbf{Case 2: $\mathbf{b}$ perturbed closer to $\mathbf{c}$}\\
	We now consider the case where $\mathbf{b}$ is perturbed towards $\mathbf{c}$ (i.e., $y<x$). Take the lower boundary value of $\cos\theta$ as shown in \eqref{eq:xyz_distances_towards} (i.e., $\cos\theta>\delta/2$) in order to obtain the greatest possible denominator value as follows:
	\begin{equation}
		\begin{aligned}
			\ln\Big(\frac{y}{\delta x}+\eta\Big)
			& =\ln\Big(                    
			\sqrt{1+\frac{1}{\delta^2}-\frac{2\cos\theta}{\delta}}+\eta\Big)\\
			& <\ln\Big(                    
			\sqrt{1+\frac{1}{\delta^2}-1}+\eta\Big)\\
			& =\ln\left(\frac{1}{\delta}+\eta\right). 
		\end{aligned}
	\end{equation}
	Thus, for all feasible values of $\theta$, we have $\ln(\frac{y}{\delta x}+\eta)<\ln(\frac{1}{\delta}+\eta)$. By replacing $\ln(\frac{y}{\delta x}+\eta)$ in \eqref{eq:lid_y_closer} with $\ln(\frac{1}{\delta}+\eta)$, we obtain the following least lower bound for $\text{LID}(\mathbf{b})$
	\begin{equation}\label{eq:inequalities_lid_lb_towards}
		\dfrac{\ln\big(F_{\mathbf{b}}(y)\big/F_{\mathbf{b}}(\delta x)\big)}{\ln(\frac{1}{\delta}+\eta)}<
		\dfrac{\ln\big(F_{\mathbf{b}}(y)\big/F_{\mathbf{b}}(\delta x)\big)}{\ln\Big(\frac{y}{\delta x}+\eta\Big)}\leq
		\text{LID}(\mathbf{b}).
	\end{equation}
	The least upper bound of \eqref{eq:lid_y_farther} can be obtained by replacing $\ln\big(\frac{y}{\delta x}-\eta\big)$ with $\ln(\frac{1}{\delta}-\eta)$ following an identical approach. 
	\begin{equation}\label{eq:inequalities_lid_ub_towards}
		\text{LID}(\mathbf{b})<
		\dfrac{\ln\big(F_{\mathbf{b}}(y)\big/F_{\mathbf{b}}(\delta x)\big)}{\ln(\frac{1}{\delta}-\eta)}<
		\dfrac{\ln\big(F_{\mathbf{b}}(y)\big/F_{\mathbf{b}}(\delta x)\big)}{\ln(\frac{y}{\delta x}-\eta)}.
	\end{equation}
	
	Therefore, from \eqref{eq:inequalities_lid_lb_towards} and \eqref{eq:inequalities_lid_ub_towards}, we observe the least lower and upper bounds of \eqref{eq:inequalities_lid_2} when $\mathbf{b}$ is perturbed towards from the reference sample $\mathbf{c}$.
	\begin{equation}\label{eq:least_lower_and_upper}
		\begin{aligned}
			\dfrac{\ln\big(F_{\mathbf{b}}(y)\big/F_{\mathbf{b}}(\delta x)\big)}{\ln(\frac{1}{\delta}+\eta)}<
			\dfrac{\ln\big(F_{\mathbf{b}}(y)\big/F_{\mathbf{b}}(\delta x)\big)}{\ln(\frac{y}{\delta x}+\eta)}\leq  
			\text{LID}(\mathbf{b})<\\  
			\dfrac{\ln\big(F_{\mathbf{b}}(y)\big/F_{\mathbf{b}}(\delta x)\big)}{\ln(\frac{1}{\delta}-\eta)}\leq                                                
			\dfrac{\ln\big(F_{\mathbf{b}}(y)\big/F_{\mathbf{b}}(\delta x)\big)}{\ln(\frac{y}{\delta x}-\eta)}.    
		\end{aligned}
	\end{equation}
\end{proof}

We note that in practice, as evidenced by the experimental results in the section that follows, attack algorithms tend to perturb data away from the domain of the benign data. Therefore, the most important result of Theorem \ref{thm:bounds} is the greatest lower-bound in \eqref{eq:greatest_lower_and_upper}. This greatest-lower bound further supports the strong association seen between the size of perturbations $\delta$ and the LID value of $\mathbf{b}$.

\section{Experimental Validation and Discussion}\label{sec:results}
In this section, we empirically evaluate the lower-bound and upper-bound derived in Theorem \ref{thm:main_theorem} for the LID value of the perturbed data point $\mathbf{b}$. Our code is available at \url{https://github.com/sandamal/adversarial_lid_bounds}.

To approximate the cumulative distributions of the distance variable (i.e., $F_{b}$), we first use kernel density estimation with a Gaussian kernel to obtain the probability density function of $\mathcal{D}_b$ \citep{silverman2018density}. Subsequently, the integral is calculated to obtain the cumulative distribution. We empirically evaluate the theoretical bounds on two real-world datasets: MNIST \citep{lecun2010mnist} and CIFAR-10 \citep{Krizhevsky09learningmultiple}. The datasets contain 10,000 samples with 782 and 3,072 features respectively. It should be noted that, similar to the work of \citet{amsaleg2020high}, the bounds hold asymptotically as the number of data samples $n$ tends to infinity. However, due to the high computational complexity of kernel density estimation, we perform our experiments on a randomly sampled subset of data points with $n\in\{1000,5000\}$.

\begin{figure}[t]
	\centering
	\begin{subfigure}{0.4\textwidth} 
		\includegraphics[width=\textwidth]{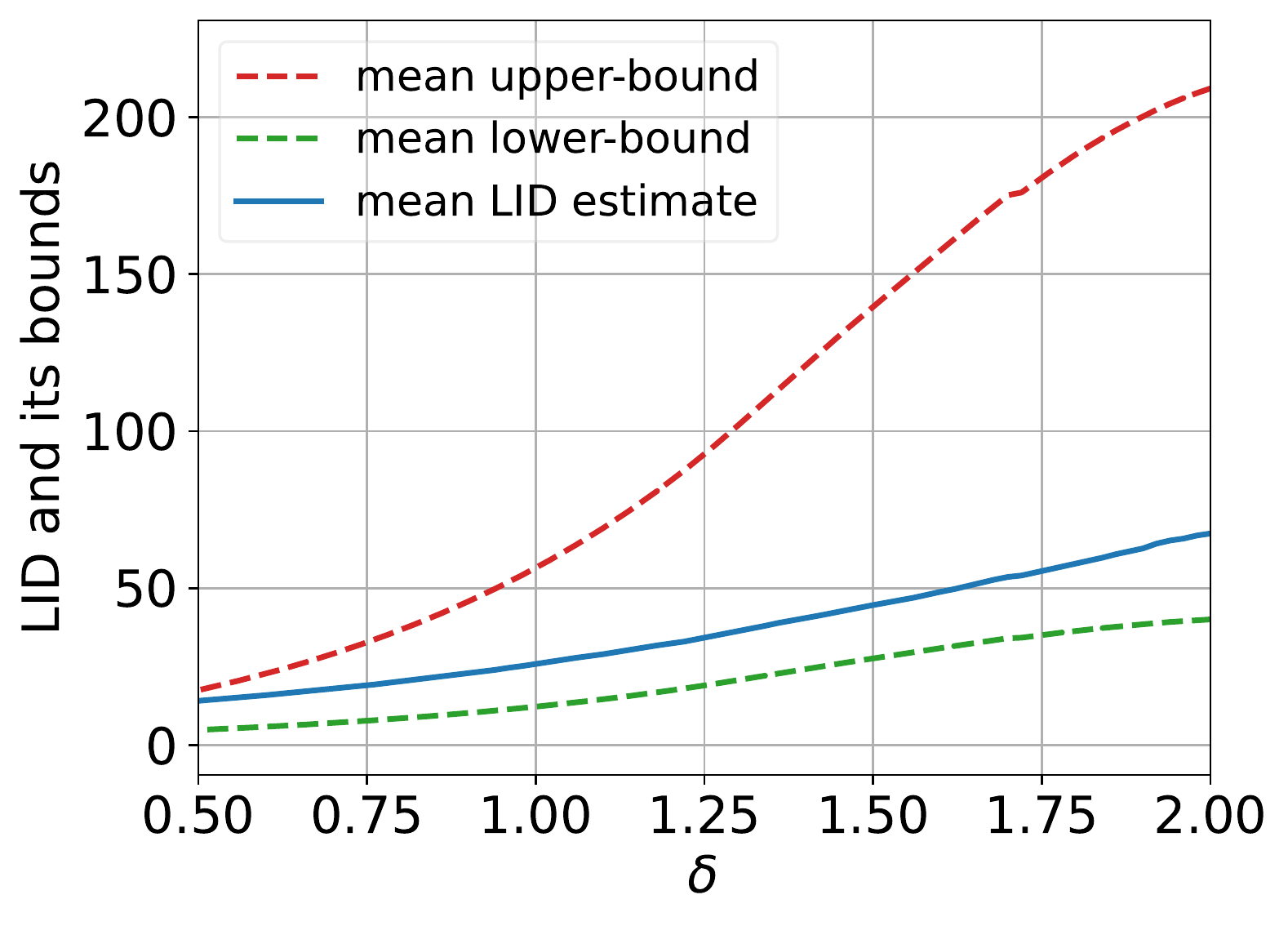}
		\caption{MNIST: $k=100$, $n_q=50$, $n=1000$.} 
		\label{fig:mnist_100}
	\end{subfigure}
	\begin{subfigure}{0.4\textwidth} 
		\includegraphics[width=\textwidth]{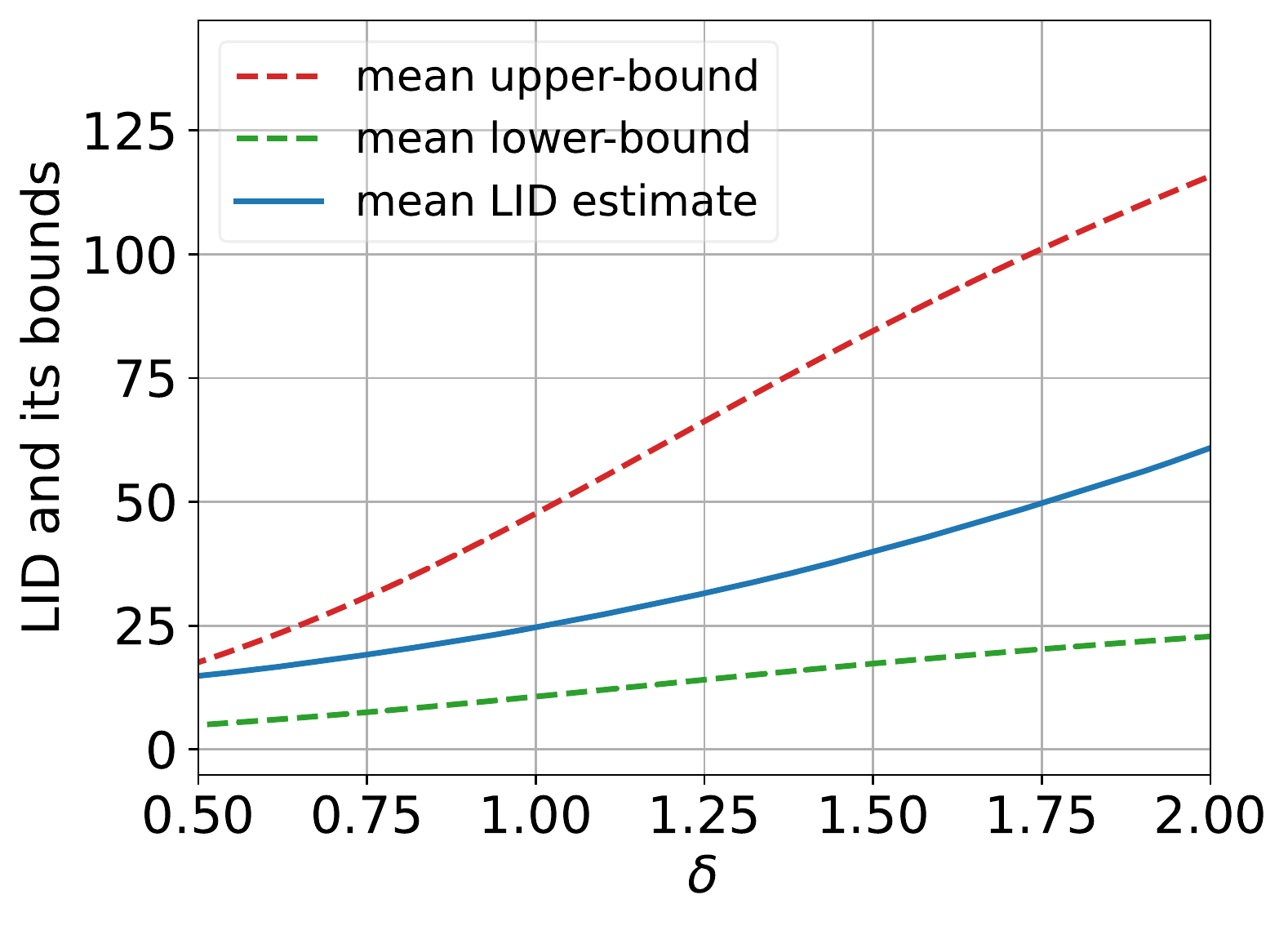}
		\caption{CIFAR-10: $k=100$, $n_q=50$, $n=1000$.} 
		\label{fig:cifar_100}
	\end{subfigure}
	\begin{subfigure}{0.4\textwidth} 
		\includegraphics[width=\textwidth]{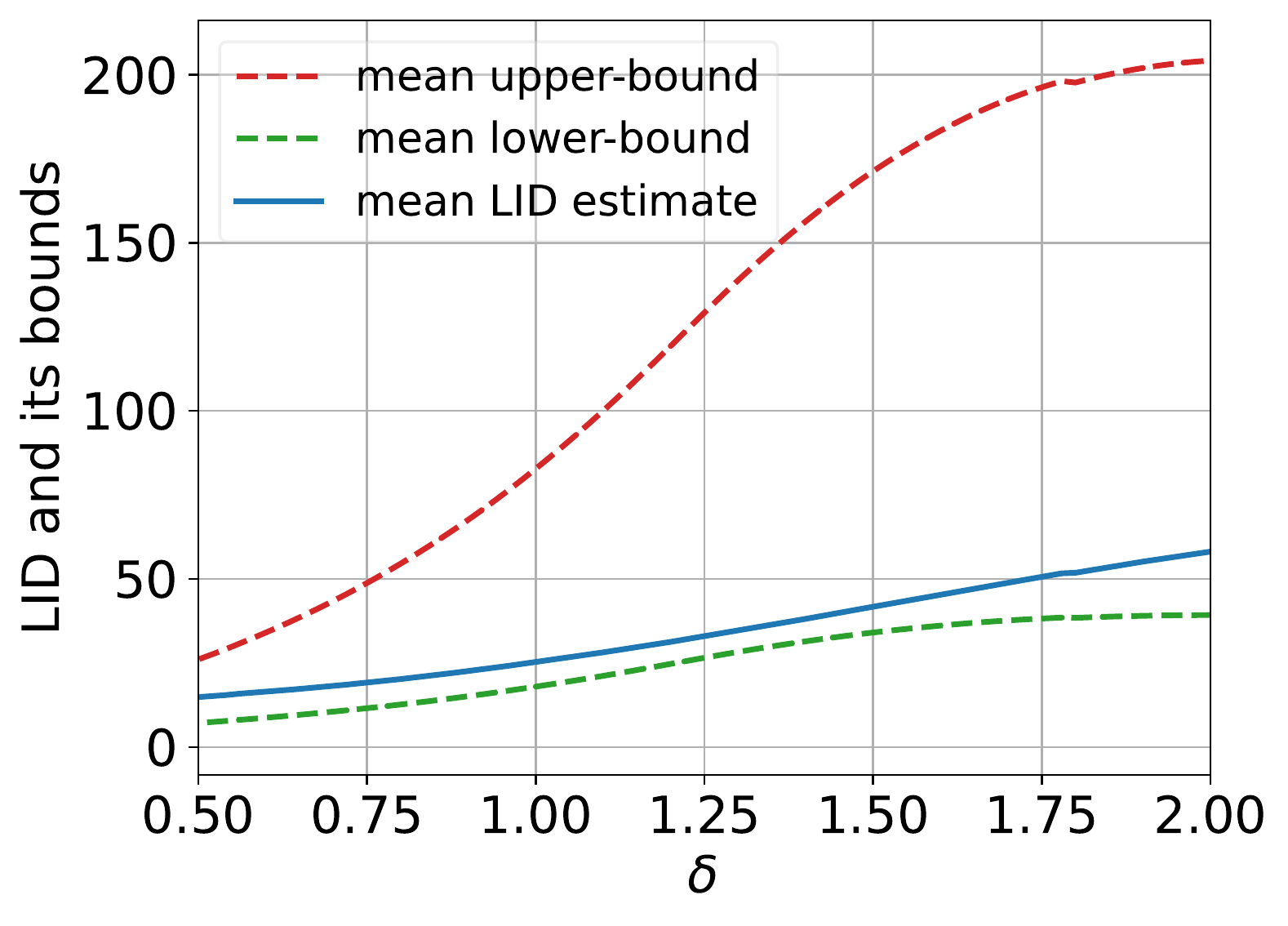}
		\caption{MNIST: $k=1000$, $n_q=50$, $n=5000$.} 
		\label{fig:mnist_1000}
	\end{subfigure}
	\begin{subfigure}{0.4\textwidth} 
		\includegraphics[width=\textwidth]{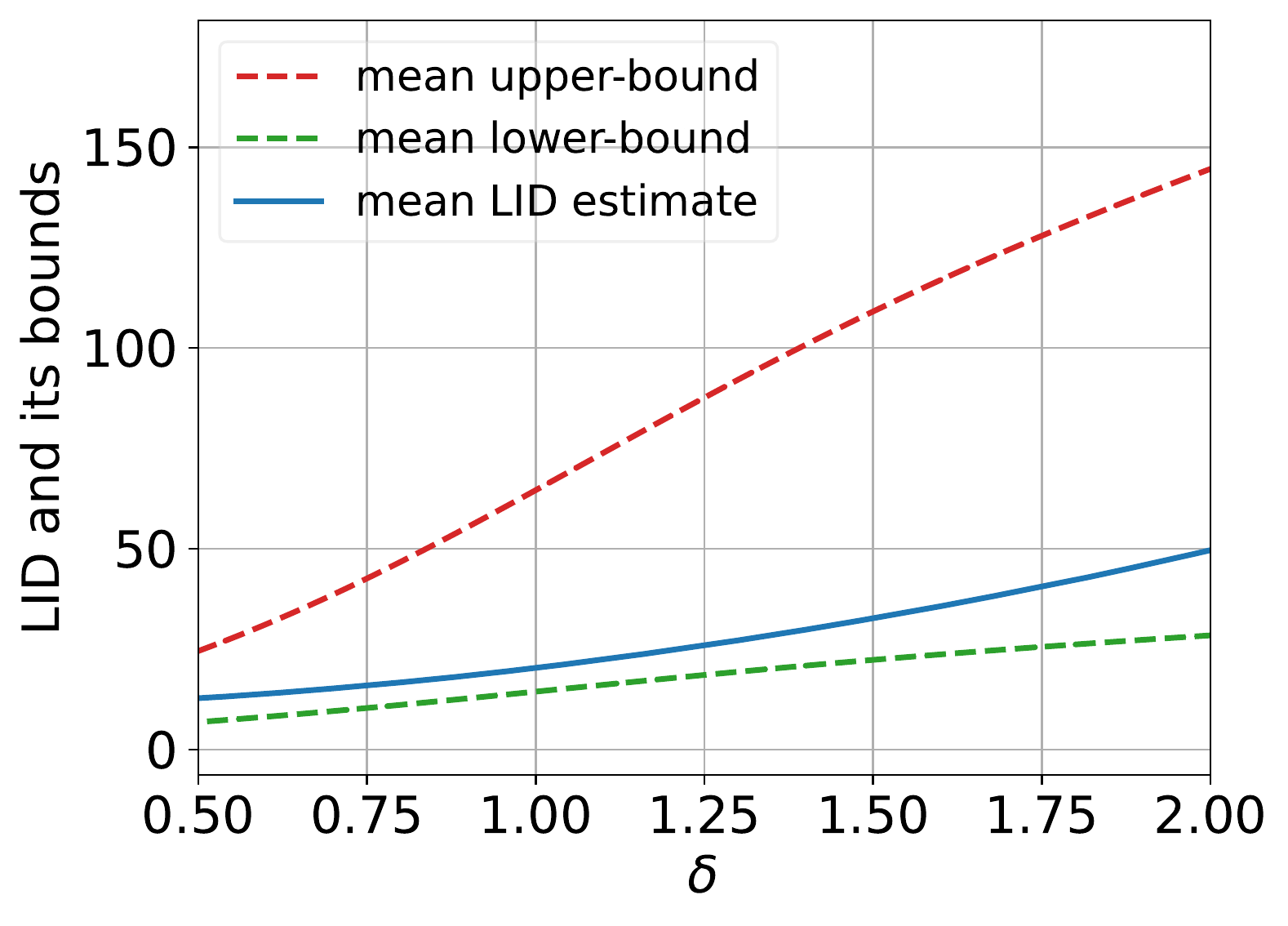}
		\caption{CIFAR-10: $k=1000$, $n_q=50$, $n=5000$.} 
		\label{fig:cifar_1000}
	\end{subfigure}
	\caption{The LID estimates and theoretical bounds on MNIST and CIFAR-10 when data points are poisoned using the gradient-based attack.}
	\label{fig:lid_bounds}
\end{figure}
We randomly select $n_{q}=50$ query points from each dataset (i.e., $\mathbf{a}$) and select the perturbation direction using a gradient based attack \citep{43405}. For the LID estimations \eqref{eq:lid_estimation} we select the neighborhood size $k$ from $\{100,1000\}$. For each query point $\mathbf{a}$, we select the data point with the rank $k/2$ from it as the reference point (i.e., $\mathbf{c}$). We then obtain the perturbed points (i.e., $\mathbf{b}$) by perturbing the query points in the direction that maximizes the training error (according to the attack algorithm). 

Figure \ref{fig:lid_bounds} shows the empirical evaluations of the theoretical bounds in Theorem \ref{thm:main_theorem}. It shows the averaged bounds for all feasible $\eta$ values as well as the estimated LID values against the perturbation $\delta$. As depicted in the figure, when $\delta$ increases, the lower-bound, LID estimate and the upper-bound increase. The most important indicator here is the lower-bound that increases with $\delta$ as it supports the intuition that LID values of adversarial samples tend to be larger. 

It should be noted that when $\delta$ is close to zero, the bounds in \eqref{eq:lid_y_inequality} exhibit some numerical instability as $F_{\mathbf{b}}(y)\big/F_{\mathbf{b}}(\delta x$), $\frac{y}{\delta x}+\eta$ and $\frac{y}{\delta x}-\eta$ tend to infinity. However, in practice, it is expected for adversarial perturbations to be sufficiently large in order to influence the learner's prediction capabilities. Therefore it is reasonable to expect $\delta$ to not be close to zero.

We observe similar numerical instabilities for \eqref{eq:greatest_lower_and_upper} and \eqref{eq:least_lower_and_upper} of Theorem \ref{thm:bounds} as well. Furthermore, when $\delta$ tends to one (i.e., the theoretical maximum value for which Theorem \ref{thm:bounds} theoretically holds), since $\eta$ is a small positive value, we observe that both lower-bounds and upper-bounds in \eqref{eq:greatest_lower_and_upper} and \eqref{eq:least_lower_and_upper} tend to infinity. Although we have considered $\delta$ values up to 2.5 (which makes the maximum perturbation $2.5x$), in practice, we observe that adversarial attacks tend to have bounded perturbations (to avoid detection).

As for the directions of perturbation, we observed that, on average, only $5.56\%$ and $0.65\%$ of the total perturbations were made in the direction towards the benign reference point $\mathbf{c}$ for MNIST and CIFAR-10 respectively. Intuitively, we expect a gradient-based attack that perturbs data in the direction that maximizes the training error to perturb data points away from the benign data domain. This finding, while preliminary, supports this intuition.

\section{Conclusions}\label{sec:conclusions}
The purpose of this paper was to theoretically determine the relationship between the magnitude of the perturbations and the LID value of the resulting perturbed data point. To that end, we derived a lower-bound and an upper-bound for the LID value of a perturbed data point based on the cumulative probability (of its distance distribution) values to its original position and a third benign reference point. The bounds, in particular the lower-bound, showed that there is a positive association between the magnitude of the perturbation $\delta$ and the LID value of the perturbed point. We then empirically evaluated the effectiveness of these bounds on two reference data sets using a gradient-based poisoning attack. Furthermore, by considering the two possible directions of perturbation, we obtained the least lower-bound and upper-bound when the direction is towards the benign reference point and the greatest lower-bound and upper-bound when the direction is away from it. These findings support the use of LID as a tool to detect adversarial samples in prior state of the art works.


\bibliographystyle{unsrtnat}
\bibliography{references.bib}  






\end{document}